  \providecommand\BibTeX{{%
    \normalfont B\kern-0.5em{\scshape i\kern-0.25em b}\kern-0.8em\TeX}}}
\newtheorem{theorem}{Theorem}
\newtheorem{lemma}{Lemma}
\newtheorem{definition}{Definition}
\def\Mat#1{{\boldsymbol{#1}}}
\def\Vec#1{{\boldsymbol{#1}}}
\begin{document}

%%
%% The "title" command has an optional parameter,
%% allowing the author to define a "short title" to be used in page headers.
\title{Graph Representation Learning via Graphical Mutual Information Maximization}

%%
%% The "author" command and its associated commands are used to define
%% the authors and their affiliations.
%% Of note is the shared affiliation of the first two authors, and the
%% "authornote" and "authornotemark" commands
%% used to denote shared contribution to the research.
\author{Zhen Peng$^{1*}$, Wenbing Huang$^{2\dagger}$, Minnan Luo$^{1\dagger}$, Qinghua Zheng$^1$, Yu Rong$^3$,}
\author{Tingyang Xu$^3$, Junzhou Huang$^3$}
\thanks{$^{\dagger}$Corresponding authors.\\ $^{*}$Work done during an internship at Tencent AI Lab.}
\affiliation{
	\institution{$^1$Ministry of Education Key Lab for Intelligent Networks and Network Security, School of Computer Science and Technology, Xi'an Jiaotong University, China}
	\institution{$^2$Beijing National Research Center for Information Science and Technology (BNRist), State Key Lab of Intelligent Technology and Systems, Department of Computer Science and Technology, Tsinghua University, China}
	\institution{$^3$Tencent AI Lab, China}
}
\email{zhenpeng27@outlook.com, hwenbing@126.com, {minnluo,qhzheng}@xjtu.edu.cn,} 
\email{yu.rong@hotmail.com, tingyangxu@tencent.com, jzhuang@uta.edu}

%%
%% By default, the full list of authors will be used in the page
%% headers. Often, this list is too long, and will overlap
%% other information printed in the page headers. This command allows
%% the author to define a more concise list
%% of authors' names for this purpose.
\fancyhead{}
\renewcommand{\shortauthors}{Z. Peng, W. Huang, and M. Luo, et al.}

%%
%% The abstract is a short summary of the work to be presented in the
%% article.
\begin{abstract}
  The richness in the content of various information networks such as social networks and communication networks provides the unprecedented potential for learning high-quality expressive representations without external supervision. This paper investigates how to preserve and extract the abundant information from graph-structured data into embedding space in an unsupervised manner. To this end, we propose a novel concept, Graphical Mutual Information (GMI), to measure the correlation between input graphs and high-level hidden representations. GMI generalizes the idea of conventional mutual information computations from vector space to the graph domain where measuring mutual information from two aspects of node features and topological structure is indispensable. GMI exhibits several benefits: First, it is invariant to the isomorphic transformation of input graphs---an inevitable constraint in many existing graph representation learning algorithms; Besides, it can be efficiently estimated and maximized by current mutual information estimation methods such as MINE; Finally, our theoretical analysis confirms its correctness and rationality. With the aid of GMI, we develop an unsupervised learning model trained by maximizing GMI between the input and output of a graph neural encoder. Considerable experiments on transductive as well as inductive node classification and link prediction demonstrate that our method outperforms state-of-the-art unsupervised counterparts, and even sometimes exceeds the performance of supervised ones.
\end{abstract}

%%
%% The code below is generated by the tool at http://dl.acm.org/ccs.cfm.
%% Please copy and paste the code instead of the example below.
%%
\begin{CCSXML}
	<ccs2012>
	<concept>
	<concept_id>10010520.10010553.10010562</concept_id>
	<concept_desc>Mathematics of computing~Graph algorithms</concept_desc>
	<concept_significance>500</concept_significance>
	</concept>
	<concept>
	<concept_id>10010520.10010575.10010755</concept_id>
	<concept_desc>Computing methodologies~Neural networks</concept_desc>
	<concept_significance>500</concept_significance>
	</concept>
	</ccs2012>
\end{CCSXML}

\ccsdesc[500]{Mathematics of computing~Graph algorithms}
\ccsdesc[500]{Computing methodologies~Neural networks}

%%
%% Keywords. The author(s) should pick words that accurately describe
%% the work being presented. Separate the keywords with commas.
\keywords{Graph representation learning, Mutual information, InfoMax.}

%% A "teaser" image appears between the author and affiliation
%% information and the body of the document, and typically spans the
%% page.
%\begin{teaserfigure}
%  \includegraphics[width=\textwidth]{sampleteaser}
%  \caption{Seattle Mariners at Spring Training, 2010.}
%  \Description{Enjoying the baseball game from the third-base
%  seats. Ichiro Suzuki preparing to bat.}
%  \label{fig:teaser}
%\end{teaserfigure}

%%
%% This command processes the author and affiliation and title
%% information and builds the first part of the formatted document.
\maketitle

\section{Introduction}
To achieve reliable quantitative analysis for diverse information networks, high-quality representation learning for graph-structured data has become one of the current fascinating topics. Recent methods towards this goal, commonly categorized as Graph Neural Networks (GNNs), have made remarkable advancements in a great many learning tasks, such as node classification~\cite{zhang2019bayesian,wu2019net,qu2019gmnn}, link prediction~\cite{liben2007link,kipf2016variational,zhang2018link}, graph alignment~\cite{heimann2018regal,Faerman2019GraphAN,wu2019relation}, molecular generation~\cite{bjerrum2017molecular,you2018graph,bresson2019two}, to name a few. Albeit in fruitful progress, training GNNs in existing approaches usually requires a certain form of supervision. Undoubtedly, the labeling information is expensive to acquire---manual annotation or paying for permission, and is even impossible to attain because of the privacy policy. Not to mention that the reliability of given labels is sometimes questionable. Hence, how to achieve high-quality graph representation without supervision becomes necessitated for a great many practical cases, which motivates the study of this paper.

\begin{figure*}[t] 
	\centering 
	\includegraphics[width=0.9\textwidth]{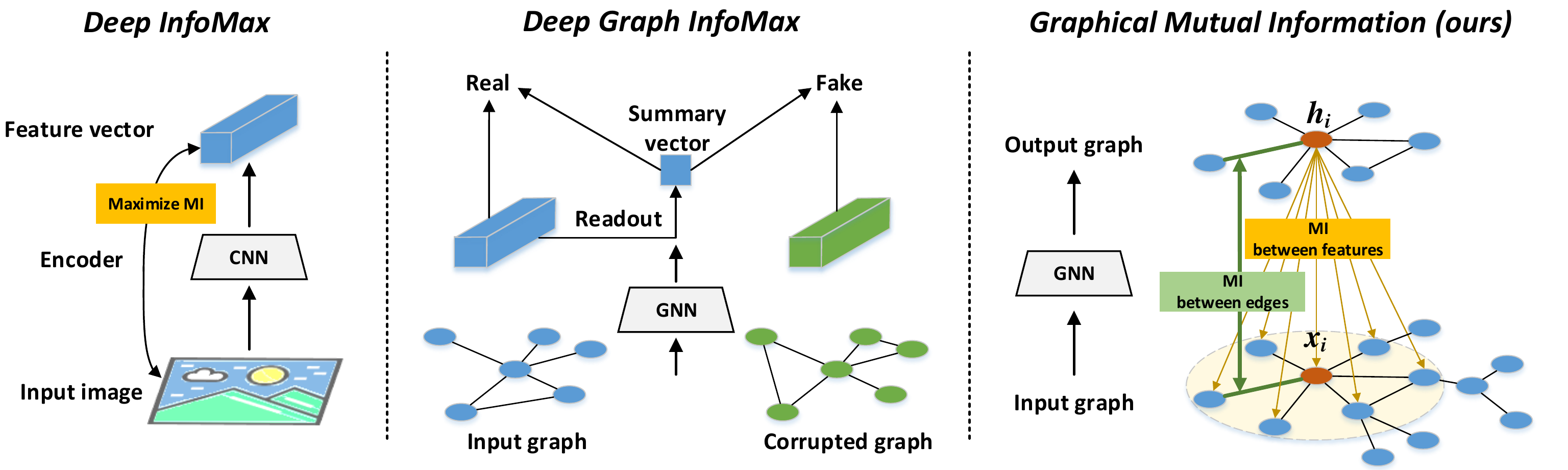}
	\caption{A high-level overview of Deep InfoMax (\textbf{left}), Deep Graph InfoMax (DGI)\  (\textbf{middle}), and Graphical Mutual Information (GMI)\  (\textbf{right}). Note that graphs with topology and features are more complicated than images that involve features only, thus GMI ought to maximize the MI of both features and edges between inputs (\emph{i.e.}, an input graph) and outputs (\emph{i.e.}, an output graph) of the encoder. The architecture of GMI is quite different from that of DGI which will be explained in the introduction.}
	\label{fig:graph4}
\end{figure*}

Through carefully examining our collected graph-structured data, we find that they generally come from various sources such as social networks, citation networks, and communication networks where a tremendous amount of both content and linkage information exist. For instance, data on many social platforms like Twitter, Flickr, and Facebook include features of users, \emph{e.g.}, basic personal details, texts, images, IP, and their relations, \emph{e.g.}, buying the same item, being friends. These rich content data are sufficient to support subsequent mining tasks without additional guidance: if two entities exhibit the extreme similarity in features, there is a high probability of a link between them (link prediction), and they are likely to belong to the same category (classification); if two entities both link to the same entity, they probably have similar characteristics (recommendation). In this sense, preserving and extracting as much information as possible from information networks into embedding space facilitates learning high-quality expressive representations that exhibit desirable performance in mining tasks without any form of supervision. Unsupervised graph representation learning is a more favorable choice in many cases due to the freedom from labels, particularly when we intend to take benefit from a large scale unlabeled data in the wild.

To fully inherit the rich information in graphs, in this paper, we execute graph embedding based upon Mutual Information (MI) maximization, inspired by the empirical success of the Deep InfoMax method~\cite{hjelm2018learning} which operates on images. To discover useful representations, Deep InfoMax trains the encoder to maximize MI between its inputs (\emph{i.e.}, the images) and outputs (\emph{i.e.}, the hidden vectors). When considering Deep InfoMax in the graph domain, the first stone we need to step over is how to define MI between graphs and hidden vectors, whereas the topology of graphs is more complicated than images (see Figure~\ref{fig:graph4}). One of the challenges is to ensure the MI function between each node's hidden representation and its neighborhood input features to obey the symmetric property, or equivalently, being invariant to permutations of the neighborhoods. As one recent work considering MI, Deep Graph Infomax (DGI)~\cite{velivckovic2018deep} first embeds a input graph and a corresponding corrupted graph, then summarizes the input graph as a vector via a readout function, finally maximizes MI between this summary vector and hidden representations by discriminating the input graph (positive sample) from the corrupted graph (negative sample). Figure~\ref{fig:graph4} gives an easily understandable overview of DGI. Maximizing this kind of MI is proved to be equivalent to maximizing the one between the input node features and hidden vectors, but this equivalence holds under several preset conditions, \emph{e.g.}, the readout function should be injective, which yet seem to be over-restricted in real cases. Even we can guarantee the existence of injective readout function by certain design, \emph{e.g.}, the one used in DeepSets~\cite{zaheer2017deep}, the injective ability of readout function is also affected by how its parameters are trained. That is to say that an originally-injective function still has the risk of becoming non-injective if it is trained without any external supervision. And if the readout function is not injective, the input graph information contained in a summary vector will diminish as the size of the graph increases. Moreover, DGI stays in a coarse graph/patch-level MI maximization. Hence in DGI, there is no guarantee that the encoder can distill sufficient information from input data as it never elaborately correlates hidden representations with their original inputs.

In this paper, we put forward a more straightforward way to consider MI in terms of graphical structures without using any readout function and corruption function. We directly derive MI by comparing the input (\emph{i.e.}, the sub-graph consisting of the input neighborhood) and the output (\emph{i.e.}, the hidden representation of each node) of the encoder. And interestingly, our theoretical derivations demonstrate that the directly-formulated MI can be decomposed into a weighted sum of local MIs between each neighborhood feature and the hidden vector. In this way, we have decomposed the input features and made the MI computation tractable. Moreover, this form of MI can easily satisfy the symmetric property if we adjust the values of weights. We defer more details to~\textsection~\ref{Sec:FMI}. 
As the above MI is mainly measured at the level of node features, we term it as Feature Mutual Information (FMI). 

Two remaining issues about FMI: \textbf{1.} the combining weights are still unknown and \textbf{2.} it does not take the topology (\emph{i.e.}, the edge features) into account. To further address these two issues, we define our Graphical Mutual Information (GMI) measurement based on FMI. In particular, GMI applies an intuitive value assignment by setting the weights in FMI equal to the proximity between each neighbor and the target node in the representation space. As to retain the topology information, GMI further correlates these weights with the input edge features via an additional mutual information term. The resulting GMI is topologically invariant and also calculable with Mutual Information Neural Estimation (MINE)~\cite{belghazi2018mine}. The main contributions of our work are as follows:
\begin{itemize}
	\item \textit{\textbf{Concepts}}: We generalize the conventional MI estimation to the graph domain and propose a new concept of Graphical Mutual Information (GMI) accordingly. GMI is free from the potential risk caused by the readout function since it considers MI between input graphs and high-level embeddings in a straightforward pattern. 
	\item \textit{\textbf{Algorithms}}: Through our theoretical analysis, we give a tractable and calculable form of GMI which decomposes the entire GMI into a weighted sum of local MIs. With the help of the MINE method, GMI maximization can be easily achieved in a node-level.
	\item \textit{\textbf{Experimental Findings}}: We verify the effectiveness of GMI on several popular node classification and link prediction tasks including both transductive and inductive ones. The experiments demonstrate that our method delivers promising performance on a variety of benchmarks and it even sometimes outperforms the supervised counterparts. 
\end{itemize}

\section{Related Work}
In line with the focus of our work, we briefly review the previous work in the two following areas: \textbf{1.} mutual information estimation, and \textbf{2.} neural networks for learning representation over graphs.

\textbf{Mutual information estimation}. As InfoMax principle~\cite{bell1995information} advocates maximizing MI between the inputs and outputs of neural networks, many methods such as ICA algorithms~\cite{hyvarinen1999nonlinear,almeida2003misep} attempt to employ the idea of MI in unsupervised feature learning. Nonetheless, these methods can not be generalized to deep neural networks easily due to the difficulty in calculating MI between high dimensional continuous variables. Fortunately, Mutual Information Neural Estimation (MINE)~\cite{belghazi2018mine} makes the estimation of MI on deep neural networks feasible via training a statistics network as a classifier to distinguish samples coming from the joint distribution and the product of marginals of two random variables. Specifically, MINE uses the exact KL-based formulation of MI, while a non-KL alternative, the Jensen-Shannon divergence (JSD)~\cite{nowozin2016f}, can be used without the concern about the precise value of MI. 

\textbf{Neural networks for graph representation learning}. With the rapid development of graph neural networks (GNNs), a large number of graph representation learning algorithms based on GNNs are proposed in recent years, which exhibit stronger performance than traditional random walk-based and factorization-based embedding approaches~\cite{perozzi2014deepwalk,tang2015line,cao2015grarep,grover2016node2vec,qiu2018network}. Typically, these methods can be divided into supervised and unsupervised categories. Among them, there is a rich literature on supervised representation learning over graphs~\cite{kipf2016semi,velivckovic2017graph,chen2018fastgcn,zhang2018gaan,ding2018semi}. In spite of their variance in network architecture, they achieve empirical success with the help of labels that are often not accessible in realistic scenarios. In this case, unsupervised graph learning methods~\cite{hamilton2017inductive,duran2017learning,velivckovic2018deep} have broader application potential. The well-known method is GraphSAGE~\cite{hamilton2017inductive}, an inductive framework to train GNNs by a random-walk based objective in its unsupervised setting. And recently, DGI~\cite{velivckovic2018deep} applies the idea of MI maximization to the graph domain and obtains the strong performance in an unsupervised pattern. However, DGI implements a coarse-grained maximization ($i.e.$, maximizing MI at graph/patch-level) which makes it difficult to preserve the delicate information in the input graph. Besides, the condition imposed on the readout function used in DGI seems to be over-restricted in real cases. By contrast, we focus on removing out the restriction of readout function and arriving at graphical mutual information maximization in a node-level by directly maximizing MI between inputs and outputs of the encoder. Representations derived by our method are more sophisticated in keeping input graph information, which ensures its potential for downstream graph mining tasks, \emph{e.g.}, node classification, link prediction, and recommendation. 

\section{Graphical Mutual Information: Definition and Maximization}

Prior to going further, we first provide the preliminary concepts used in this paper. Let $\mathcal{G}=(\mathcal{V}, \mathcal{E})$ denote a graph with $N$ nodes $v_i\in\mathcal{V}$ and edges $e_{ij}=(v_i, v_j)\in\mathcal{E}$. The node features, with assumed empirical probability distribution $\mathbb{P}$, are given by $\Mat{X}\in\mathbb{R}^{N\times D}=\{\Vec{x}_1, \cdots, \Vec{x}_N\}$ where $\Vec{x}_i\in\mathbb{R}^{D}$ denotes the feature for node $v_i$. 
The adjacency matrix $\Mat{A}\in\mathbb{R}^{N\times N}$ represent edge connections, where $A_{ij}$ associated to edge $e_{ij}$ could be a real number or multi-dimensional vector\footnote{Our method is generally applicable to the graphs with edge features, although we only consider edges with real weights in our experiments.}. 

The goal of graph representation learning is to learn an encoder $f: \mathbb{R}^{N\times D}\times \mathbb{R}^{N\times N} \rightarrow \mathbb{R}^{N\times D'}$, such that the hidden vectors $\Mat{H}=\{\Vec{h}_1, \cdots, \Vec{h}_N\}=f(\Mat{X}, \Mat{A})$ indicate high-level representations for all nodes. The encoding process can be rewritten in a node-wise form. To show this, we define $\Mat{X}_i$ and $\Mat{A}_i$ for node $i$ as respectively the features of its neighbors and the corresponding adjacency matrix conditional on the neighbors. Particularly, $\Mat{X}_i$ consists of all k-hop neighbors of $v_i$ with $k\leq l$ when the encoder $f$ is an $l$-layer GNN, and it contains the node $i$ itself if we further add self-loops in the adjacency matrix. 
Here, we call the sub-graph expanded by $\Mat{X}_i$ and $\Mat{A}_i$ as a \emph{support graph} for node $i$, denoted by $\mathcal{G}_i$. With the definition of support graph, the encoding for each node becomes $\Vec{h}_i = f(\mathcal{G}_i)=f(\Mat{X}_i, \Mat{A}_i)$. 

\textbf{Difficulties in defining graphical mutual information.}
In Deep InfoMax~\cite{hjelm2018learning}, the training objective of the encoder is to maximize MI between its inputs and outputs. The MI is estimated by employing a statistics network as a discriminator to classify samples coming from the joint distribution and the ones drawn from the product of marginals. Naturally, when adapting the idea of Deep InfoMax to graphs, we should maximize MI between the representation $\Vec{h}_i$ and the support graph $\mathcal{G}_i$ for each node. We denote such graphical MI as $I(\Vec{h}_i; \mathcal{G}_i)$. However, it is non-straightforward to define $I(\Vec{h}_i; \mathcal{G}_i)$. The difficulties are: 
\begin{itemize}
	\item The graphical MI should be invariant concerning the node index. In other words, we should have $I(\Vec{h}_i; \mathcal{G}_i)=I(\Vec{h}_i; \mathcal{G'}_i)$, if $\mathcal{G}_i$ and $\mathcal{G'}_i$ are isomorphic to each other.
	\item If we adopt MINE method for MI calculation, the discriminator in MINE only accepts inputs of a fixed size. This yet is infeasible for $\mathcal{G}_i$ as different $\mathcal{G}_i$ usually include different numbers of nodes and thus are of distinct sizes.
\end{itemize}

To get around the issue of defining graphical mutual information, this section begins with introducing the concept of Feature Mutual Information (FMI) that only relies on node features. Upon the inspiration from the decomposition of FMI, we then define Graphical Mutual Information (GMI), which takes both the node features and graph topology into consideration.

\subsection{Feature Mutual Information}\label{Sec:FMI}
We denote the empirical probability distribution of node features $\Mat{X}_i$ as $p(\Mat{X}_i)$, the probability of $\Vec{h}_i$ as $p(\Vec{h}_i)$, and the joint distribution by $p(\Vec{h}_i, \Mat{X}_i)$. According to the information theory, the MI between $\Vec{h}_i$ and $\Mat{X}_i$ is defined as
\begin{eqnarray}
\label{Eq:FMI}
I(\Vec{h}_i; \Mat{X}_i) \ =\  \int_{\mathcal{H}}\int_{\mathcal{X}} p(\Vec{h}_i, \Mat{X}_i)\log\frac{ p(\Vec{h}_i, \Mat{X}_i)}{ p(\Vec{h}_i)p(\Mat{X}_i)}d\Vec{h}_i d\Mat{X}_i.
\end{eqnarray}
Interestingly, we have the following mutual information decomposition theorem for computing $I(\Vec{h}_i; \Mat{X}_i)$. 
\begin{theorem} [Mutual Information Decomposition]
	\label{Th:FMI}
	If the conditional probability $p(\Vec{h}_i|\Mat{X}_i)$ is multiplicative (see the definition of multiplicative in~\cite{renner2002mutual}), 
	the global mutual information $I(\Vec{h}_i; \Mat{X}_i)$ defined in Eq.~\eqref{Eq:FMI} can be decomposed as a weighted sum of local MIs, namely,
	\begin{eqnarray}
	\label{Eq:decompose}
	I(\Vec{h}_i; \Mat{X}_i) &=& \sum\nolimits_{j}^{i_n} w_{ij} I(\Vec{h}_i; \Mat{x}_{j}),
	\end{eqnarray}
	where, $\Mat{x}_{j}$ is the $j$-th neighbor of node $i$,  $i_n$ is the number of all elements in $\Mat{X}_i$, and the weight $w_{ij}$ satisfies $\frac{1}{i_n}\leq w_{ij} \leq 1$ for each $j$.
\end{theorem}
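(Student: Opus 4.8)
The plan is to trap the global mutual information $I(\Vec{h}_i;\Mat{X}_i)$ between the two natural bounds $\frac{1}{i_n}\sum_{j} I(\Vec{h}_i;\Mat{x}_j)$ and $\sum_{j} I(\Vec{h}_i;\Mat{x}_j)$, and then read off admissible weights by a one-line normalization. Throughout, write $\Mat{X}_i=(\Mat{x}_1,\dots,\Mat{x}_{i_n})$ and abbreviate $c_j:=I(\Vec{h}_i;\Mat{x}_j)\ge 0$, with all sums over $j$ running from $1$ to $i_n$.

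\textbf{Lower bound.} For each index $j$, group $\Mat{x}_j$ first in the chain rule for mutual information to obtain $I(\Vec{h}_i;\Mat{X}_i)=I(\Vec{h}_i;\Mat{x}_j)+I(\Vec{h}_i;\Mat{X}_i\setminus\Mat{x}_j\mid\Mat{x}_j)\ge I(\Vec{h}_i;\Mat{x}_j)$, the last step because conditional mutual information is nonnegative. Averaging these $i_n$ inequalities over $j$ yields $I(\Vec{h}_i;\Mat{X}_i)\ge\frac{1}{i_n}\sum_j c_j$. Note that this half requires no assumption on the channel $p(\Vec{h}_i\mid\Mat{X}_i)$.

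\textbf{Upper bound.} This is the step where multiplicativity is essential, and it is the main obstacle. Plugging the factorization $p(\Vec{h}_i\mid\Mat{X}_i)=\prod_j f_j(\Vec{h}_i,\Mat{x}_j)$ into Eq.~\eqref{Eq:FMI} makes the log-density ratio additive, $\log p(\Vec{h}_i\mid\Mat{X}_i)-\log p(\Vec{h}_i)=\sum_j\log f_j(\Vec{h}_i,\Mat{x}_j)+H(\Vec{h}_i)$; since each one-coordinate channel $p(\Vec{h}_i\mid\Mat{x}_j)$ retains the factor $f_j(\Vec{h}_i,\Mat{x}_j)$ up to an expectation over the remaining factors, comparing the resulting expansion of $I(\Vec{h}_i;\Mat{X}_i)$ with that of $\sum_j I(\Vec{h}_i;\Mat{x}_j)$ reduces the claim $I(\Vec{h}_i;\Mat{X}_i)\le\sum_j c_j$ to the subadditivity of mutual information over multiplicative conditional distributions, which is exactly the result of Renner and Maurer~\cite{renner2002mutual}; I would either cite it directly or re-derive it by bounding those remainder expectations. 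It is worth recording why the hypothesis cannot be dropped: for a non-multiplicative channel (an XOR-type coupling is the textbook example) $I(\Vec{h}_i;\Mat{X}_i)$ can strictly exceed $\sum_j c_j$, so no decomposition with $w_{ij}\le 1$ could exist.

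\textbf{Extracting the weights.} Combining the two bounds gives $\frac{1}{i_n}\sum_j c_j\le I(\Vec{h}_i;\Mat{X}_i)\le\sum_j c_j$. If $\sum_j c_j>0$, set $w_{ij}:=I(\Vec{h}_i;\Mat{X}_i)/\sum_k c_k$ for every $j$; then $\sum_j w_{ij}c_j=I(\Vec{h}_i;\Mat{X}_i)$ by construction, and dividing the displayed bounds by $\sum_k c_k$ shows $1/i_n\le w_{ij}\le 1$, which is exactly Eq.~\eqref{Eq:decompose}. If $\sum_j c_j=0$ the bounds force $I(\Vec{h}_i;\Mat{X}_i)=0$ and any weights in $[1/i_n,1]$ suffice. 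The choice is visibly non-unique, which is harmless because GMI will subsequently reassign the weights to node-pair proximities. In sum, the lower bound is immediate from the chain rule and the final step is arithmetic, so practically all the work lies in the multiplicativity-driven upper bound.
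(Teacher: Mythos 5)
Your proof is correct and follows essentially the same route as the paper's: the lower bound via nonnegativity of conditional mutual information (the paper's Lemma~\ref{lem1}), the upper bound via the Renner--Maurer subadditivity result for multiplicative conditional distributions (the paper's Lemma~\ref{lem2}), and the identical normalization $w_{ij}=I(\Vec{h}_i;\Mat{X}_i)/\sum_k I(\Vec{h}_i;\Mat{x}_k)$ to extract the weights. Your explicit treatment of the degenerate case $\sum_j I(\Vec{h}_i;\Mat{x}_j)=0$ is a small point of added care that the paper omits.
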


To prove the above theorem, we first introduce two lemmas and a definition.
\begin{lemma}\label{lem1}
	For any random variables $X$, $Y$, and $Z$, we have 
	\begin{eqnarray}
	I(X,Y; Z) &\geq& I(X; Z).
	\end{eqnarray}
\end{lemma}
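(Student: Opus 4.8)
The plan is to invoke the chain rule for mutual information together with the non-negativity of (conditional) mutual information. First I would expand the joint mutual information $I(X,Y;Z)$ via the chain rule, writing $I(X,Y;Z) = I(X;Z) + I(Y;Z\mid X)$, where $I(Y;Z\mid X)$ denotes the conditional mutual information of $Y$ and $Z$ given $X$. This identity follows directly from the entropy form $I(U;V) = H(V) - H(V\mid U)$ (or its differential-entropy analogue) by adding and subtracting $H(Z\mid X)$: indeed $I(X,Y;Z) = H(Z) - H(Z\mid X,Y) = \big(H(Z) - H(Z\mid X)\big) + \big(H(Z\mid X) - H(Z\mid X,Y)\big) = I(X;Z) + I(Y;Z\mid X)$.

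Second, I would argue that $I(Y;Z\mid X) \ge 0$. This is the standard fact that mutual information is non-negative, applied conditionally: for each value $x$ of $X$, the quantity $I(Y;Z\mid X=x)$ is the Kullback--Leibler divergence between the conditional joint law of $(Y,Z)$ given $X=x$ and the product of its conditional marginals, which is non-negative by Jensen's inequality (equivalently, conditioning does not increase entropy, so $H(Z\mid X,Y)\le H(Z\mid X)$). Taking the expectation over $X$ preserves this non-negativity, hence $I(Y;Z\mid X)\ge 0$.

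Combining the two steps yields $I(X,Y;Z) = I(X;Z) + I(Y;Z\mid X) \ge I(X;Z)$, which is exactly the claim. I do not anticipate a genuine obstacle here; the only point needing mild care is to phrase the chain rule and the non-negativity of (conditional) mutual information in whichever formulation (discrete Shannon entropy or continuous differential entropy) matches the setting of Theorem~\ref{Th:FMI}, the continuous case merely requiring that the relevant densities exist --- an assumption already implicit in Eq.~\eqref{Eq:FMI}.
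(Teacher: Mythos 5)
Your proposal is correct and matches the paper's argument in substance: both reduce the claim to the identity $I(X,Y;Z)-I(X;Z)=I(Y;Z\mid X)$ and then invoke the non-negativity of conditional mutual information. The only cosmetic difference is that you obtain the identity via the entropy-form chain rule, whereas the paper derives it by directly manipulating the defining integrals until the difference takes the form of $I(Y;Z\mid X)$.
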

\begin{proof}
	\begin{equation}\nonumber
	\begin{aligned}
	&\quad\  I(X,Y; Z)-I(X; Z)\\
	&= \iiint_{\mathcal{X}\mathcal{Y}\mathcal{Z}} p(X,Y,Z)\log\frac{p(X,Y,Z)}{p(X,Y)p(Z)} dXdYdZ - \\ &\quad  \iint_{\mathcal{X}\mathcal{Z}}p(X,Z)\log\frac{p(X,Z)}{p(X)p(Z)}dXdZ\\
	&= \iiint_{\mathcal{X}\mathcal{Y}\mathcal{Z}} p(X,Y,Z)\log\frac{ p(X,Y,Z)}{ p(X,Y)p(Z)} dXdYdZ - \\ &\quad \iiint_{\mathcal{X}\mathcal{Y}\mathcal{Z}}p(X,Y,Z)\log\frac{p(X,Z)}{p(X)p(Z)} dXdYdZ\\
	&= \iiint_{\mathcal{X}\mathcal{Y}\mathcal{Z}}p(X,Y,Z)\log\frac{p(X,Y,Z)}{p(X,Y)} \cdot \frac{p(X)}{p(X,Z)} dXdYdZ\\
	&=\iiint_{\mathcal{X}\mathcal{Y}\mathcal{Z}}p(X,Y,Z)\log\frac{p(X,Y,Z)}{p(Y|X)p(X,Z)} dXdYdZ\\
	&=\iiint_{\mathcal{X}\mathcal{Y}\mathcal{Z}}p(Y,Z|X)p(X)\log\frac{p(Y,Z|X)}{p(Y|X)p(Z|X)} dXdYdZ\\
	&=I(Y; Z|X)\ \geq\ 0
	\end{aligned}
	\end{equation}
	Thus we achieve $I(X,Y; Z) \geq I(X; Z)$.
\end{proof}

\begin{definition}
	The conditional probability $p(h|X_{1},\cdots,X_{n})$ is called multiplicative if it can be written as a product
	\begin{eqnarray}
	p(h|X_{1},\cdots,X_{n}) &=& r_{1}(h,X_{1})\cdots r_{n}(h,X_{n}) 
	\end{eqnarray}
	with appropriate functions $r_{1},\cdots,r_{n}$.
\end{definition}

\begin{lemma}\label{lem2}
	If $p(h|X_{1},\cdots,X_{n})$ is multiplicative, then we have 
	\begin{eqnarray}
	I(X;Z)\ +\ I(Y;Z) &\geq& I(X,Y;Z)
	\end{eqnarray}
\end{lemma}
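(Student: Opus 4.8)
\textbf{Proof plan (Lemma~\ref{lem2}).} In this lemma $Z$ plays the role of the hidden vector $\Vec{h}_i$ and $(X,Y)$ a two-way split of the feature block, so ``multiplicative'' here means $p(z\mid x,y)=r_{1}(z,x)\,r_{2}(z,y)$. The plan is to rewrite the claimed gap as a single integral, use this product form to collapse it, and close with Jensen's inequality, following~\cite{renner2002mutual}.

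First I would reduce to a co-information statement. Using the chain rule $I(X,Y;Z)=I(X;Z)+I(Y;Z\mid X)$ and the same kind of integral manipulation as in the proof of Lemma~\ref{lem1}, one obtains
\begin{equation}\nonumber
I(X;Z)+I(Y;Z)-I(X,Y;Z)=\iiint_{\mathcal X\mathcal Y\mathcal Z} p(x,y,z)\,\log\frac{p(z\mid x)\,p(z\mid y)}{p(z)\,p(z\mid x,y)}\,dx\,dy\,dz,
\end{equation}
so it suffices to show this integral (which equals $I(Y;Z)-I(Y;Z\mid X)$, the co-information of $X,Y,Z$) is non-negative. Next I would substitute multiplicativity: marginalizing $p(z\mid x,y)=r_{1}(z,x)r_{2}(z,y)$ one variable at a time gives $p(z\mid x)=r_{1}(z,x)\,a(z,x)$ with $a(z,x)=\int r_{2}(z,y')p(y'\mid x)\,dy'$, and symmetrically $p(z\mid y)=r_{2}(z,y)\,b(z,y)$; the $r_{1},r_{2}$ factors then cancel and the integrand becomes $\log\frac{a(z,x)\,b(z,y)}{p(z)}$.

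Finally, to show $\iiint p(x,y,z)\log\frac{a(z,x)b(z,y)}{p(z)}\ge 0$, I would apply Jensen's inequality to the concave function $\log$, which reduces the claim to the single mass estimate
\begin{equation}\nonumber
\iiint p(x,y,z)\,\frac{p(z)}{a(z,x)\,b(z,y)}\,dx\,dy\,dz\le 1
\end{equation}
(equivalently, that $\hat q(x,y,z):=p(x,y,z)\,p(z)/(a(z,x)b(z,y))$ is a sub-probability density, so that the gap is a relative entropy up to a non-negative additive constant). I expect this mass estimate to be the main obstacle, and it is precisely the step where the \emph{full} multiplicative hypothesis is needed, not merely the factored form at fixed $z$: after plugging in $p(x,y,z)=p(x,y)r_{1}(z,x)r_{2}(z,y)$ and $p(z)=\iint r_{1}(z,x')r_{2}(z,y')p(x',y')\,dx'dy'$, it is the normalization $\int r_{1}(z,x)r_{2}(z,y)\,dz=1$ that tames the nested integrals defining $a$ and $b$ (one should also fix the gauge freedom $r_{1}\mapsto c(z)r_{1}$, $r_{2}\mapsto r_{2}/c(z)$). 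Useful sanity checks: when $X\perp Y$ the integrand is identically $1$ and the inequality becomes the equality $I(X;Z)+I(Y;Z)=I(X,Y;Z)$, and when the channel ignores one of $X,Y$ the claim is immediate from non-negativity of mutual information; the general case interpolates. The $n$-term version of the inequality follows by an analogous argument (as in~\cite{renner2002mutual}), and combining it with Lemma~\ref{lem1} sandwiches $I(\Vec{h}_i;\Mat{X}_i)$ between $\frac{1}{i_n}\sum_{j}I(\Vec{h}_i;\Mat{x}_{j})$ and $\sum_{j}I(\Vec{h}_i;\Mat{x}_{j})$, which yields Theorem~\ref{Th:FMI}.
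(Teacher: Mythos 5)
The paper does not actually prove this lemma at all---its ``proof'' is the single line ``See~\cite{renner2002mutual} for detailed proof''---so what you are doing is reconstructing Renner--Maurer's argument, and your reconstruction is sound right up to the decisive step, where it stops. The reduction to the co-information integral, the substitution $p(z\mid x)=r_1(z,x)\,a(z,x)$ and $p(z\mid y)=r_2(z,y)\,b(z,y)$, the cancellation leaving $\log\bigl(a(z,x)b(z,y)/p(z)\bigr)$, and the direction of the Jensen step are all correct (and the gauge freedom you worry about is harmless: $a\cdot b$ is invariant under $r_1\mapsto c(z)r_1$, $r_2\mapsto r_2/c(z)$). But the mass estimate $\iiint p(x,y,z)\,p(z)/\bigl(a(z,x)b(z,y)\bigr)\,dx\,dy\,dz\le 1$ is not a verification you may defer: everything before it is an identity valid for arbitrary channels (only the cancellation uses multiplicativity), so this single inequality carries the entire content of the lemma, and it is genuinely delicate. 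In particular it fails slice-by-slice in $z$: for a fixed $z$ one can choose a correlated $p(x,y)$ and positive weights $r_1(z,\cdot),r_2(z,\cdot)$ making the conditional mass $\iint p(x,y\mid z)\,p(z)/(ab)\,dx\,dy$ strictly exceed $1$, so the bound only emerges after integrating over $z$ and invoking $\int r_1(z,x)r_2(z,y)\,dz=1$---exactly the step you flag as ``the main obstacle'' but do not carry out. As written, the proposal therefore has a genuine gap at its crux.

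The gap is closable, and one route avoids the mass estimate entirely: set $\bar r_1(z)=\int p(x)r_1(z,x)\,dx$ and split $\log\frac{a(z,x)b(z,y)}{p(z)}=\log\frac{a(z,x)\bar r_1(z)}{p(z)}+\log\frac{b(z,y)}{\bar r_1(z)}$. The expectation of the first term is $D\bigl(p_{XZ}\,\|\,q_1\bigr)$ with $q_1(x,z)=p(x)r_1(z,x)p(z)/\bar r_1(z)$, normalized because $p(z)=\int p(x)r_1(z,x)a(z,x)\,dx$ integrates to one; the expectation of the second is $D\bigl(p_{YZ}\,\|\,q_2\bigr)$ with $q_2(y,z)=p(y)r_2(z,y)\bar r_1(z)$, normalized precisely because $\iint p(x)p(y)\bigl(\int r_1(z,x)r_2(z,y)\,dz\bigr)\,dx\,dy=1$. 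Both relative entropies are nonnegative, which proves the lemma; note this needs $\int r_1(z,x)r_2(z,y)\,dz=1$ on $\mathrm{supp}(p_X)\times\mathrm{supp}(p_Y)$ and not merely on $\mathrm{supp}(p_{XY})$, a support caveat worth stating. Your closing remarks---the equality case under $X\perp Y$, the $n$-term extension, and the combination with Lemma~\ref{lem1} to obtain Theorem~\ref{Th:FMI}---are all consistent with how the paper uses the lemma.
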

\begin{proof}
	See~\cite{renner2002mutual} for detailed proof.
\end{proof}

\noindent Now all the necessities for proving Theorem~\ref{Th:FMI} are in place.
\begin{proof}
	According to Lemma~\ref{lem1}, for any $j$ we have 
	\begin{eqnarray}
	I(\Vec{h}_{i};\Mat{X}_{i}) &=& I(\Vec{h}_{i};\Mat{x}_{1},\cdots,\Mat{x}_{i_{n}})\ \ \geq\ \  I(\Vec{h}_{i};\Mat{x}_{j}).
	\end{eqnarray}
	It means 
	\begin{eqnarray}\label{refq1}
	I(\Vec{h}_{i};\Mat{X}_{i}) &=& \sum\frac{1}{i_{n}}I(\Vec{h}_{i};\Mat{X}_{i})\ \geq\  \sum\frac{1}{i_{n}}I(\Vec{h}_{i};\Mat{x}_{j}).
	\end{eqnarray}
	On the other hand, based on Lemma~\ref{lem2}, we get
	\begin{eqnarray}\label{refq2}
	I(\Vec{h}_{i};\Mat{X}_{i}) &\leq& \sum I(\Vec{h}_{i};\Mat{x}_{j}).
	\end{eqnarray}
	Then the above two formulas could deduce the following
	\begin{eqnarray}\label{eqref1}
	\sum\frac{1}{i_{n}}I(\Vec{h}_{i};\Mat{x}_{j}) &\leq& I(\Vec{h}_{i};\Mat{X}_{i})\ \ \leq\ \ \sum I(\Vec{h}_{i};\Mat{x}_{j}).
	\end{eqnarray}
	As all $I(\Vec{h}_{i};\Mat{x}_{j})\ \geq\ 0$, there must exist weights $\frac{1}{i_{n}}\ \leq\ w_{ij}\ \leq\ 1$. When setting $w_{ij}=I(\Vec{h}_i; \Mat{X}_i)\, /\, \sum I(\Vec{h}_i; \Vec{x}_j)$, we will achieve Eq.~\eqref{Eq:decompose} while ensuring $1/i_{n}\,\leq\,w_{ij}\,\leq\,1$ by Eq.~\eqref{eqref1}, thus the Theorem~\ref{Th:FMI} has been proved.
\end{proof}

With the decomposition in Theorem~\ref{Th:FMI}, we can calculate the right side of Eq.~\eqref{Eq:decompose} via MINE as inputs of the discriminator now become the pairs of $(\Vec{h}_i, \Mat{x}_{j})$ whose size always keep the same (\emph{i.e.}, $D'$-by-$D$). Besides, we can adjust the weights to reflect the isomorphic transformation of input graphs. For instance, if $\Mat{X}_i$ only contains one-hop neighbors of node $i$, setting all weights to be identical will lead to the same MI for the input nodes in different orders. 

Despite some benefits of the decomposition, it is hard to characterize the exact values of the weights since they are related to the values of $I(\Vec{h}_i; \Mat{x}_j)$ and their underlying probability distributions. A trivial way is setting all weights to be $\frac{1}{i_n}$, then maximizing the right side of Eq.~\eqref{Eq:decompose} equivalents to maximizing the lower bound of $I(\Vec{h}_i; \Mat{X}_i)$, by which the true FMI is also maximized to some extent. Besides this method, we additionally provide a more enhanced solution by considering the weights as trainable attentions, which is the topic in the next subsection.

\begin{table*}[t]
	\centering
	\begin{threeparttable}
		\caption{Statistics of the datasets used in experiments.}\label{table:t1}  
		\begin{tabular}{cccccccc}
			\toprule
			\multicolumn{2}{c}{\textbf{Task}}& \textbf{Dataset} & \textbf{Type}& \textbf{$\#$Nodes} & \textbf{$\#$Edges} & \textbf{$\#$Features} & \textbf{$\#$Classes} \\
			\midrule
			\multirow{5}*{Classification} &\multirow{3}{*}{Transductive}& Cora & Citation network& 2,708 & 5,429 & 1,433 & 7\\
			&& Citeseer & Citation network& 3,327 & 4,732 & 3,703 & 6\\
			&& PubMed & Citation network& 19,717 & 44,338 & 500 & 3\\
			\cmidrule(lr){2-8} 
			&\multirow{2}{*}{Inductive}& Reddit & Social network& 232,965 &  11,606,919 & 602 & 41\\
			&& PPI & Protein network& 56,944 & 806,174 & 50 & 121$^{*}$\\
			\midrule
			\multicolumn{2}{c}{\multirow{4}*{Link prediction}} & Cora & Citation network& 2,708 & 5,429 & 1,433 & 7\\
			&&BlogCatalog& Social network& 5,196& 171,743& 8,189&6\\
			&&Flickr& Social network& 7,575& 239,738& 12,047&9\\
			&&PPI& Protein network& 56,944 & 806,174 & 50 & 121\\
			\bottomrule
		\end{tabular}
		\begin{tablenotes}
			\item[$*$] The task on PPI is a multilabel classification problem.
		\end{tablenotes}
	\end{threeparttable}
\end{table*}

\subsection{Topology-Aware Mutual Information}
\label{Sec:GMI}
Inspired from the decomposition in Theorem~\ref{Th:FMI}, we attempt to construct trainable weights from the other aspect of graphs (\emph{i.e.}, topological view) so that the values of $w_{ij}$ can be more flexible and capture the inherent property of graphs. Ultimately we derive the definition of Graphical Mutual Information (GMI).

\begin{definition} [Graphical Mutual Information]
	\label{De:GMI}
	The MI between the hidden vector $\Vec{h}_i$ and its support graph $\mathcal{G}_i=(\Mat{X}_i, \Mat{A}_i)$ is defined as
	\begin{eqnarray}
	\label{Eq:gmi}
	\begin{split}
	I(\Vec{h}_i; \mathcal{G}_i) \quad &\coloneqq \quad \sum\nolimits_{j}^{i_n} w_{ij} I(\Vec{h}_i; \Mat{x}_{j}) + I(w_{ij}; \Mat{a}_{ij}),\\ 
	&\quad\quad\text{with}\quad w_{ij}=\sigma(\Vec{h}_i^{\mathrm{T}}\Vec{h}_j), 
	\end{split}
	\end{eqnarray}
	where the definitions of both $\Vec{x}_j$ and $i_n$ are as same as Theorem~\ref{Th:FMI}, $\Vec{a}_{ij}$ is the edge weight/feature in the adjacency matrix $\Mat{A}$, and $\sigma(\cdot)$ is a sigmoid function.
\end{definition}

Intuitively, weight $w_{ij}$ in the first term of Eq.~\eqref{Eq:gmi} measures the contribution of a local MI to the global one. We implement the contribution of $I(\Vec{h}_i; \Mat{x}_{j})$ by the similarity between representations $\Vec{h}_i$ and $\Vec{h}_j$ (\emph{i.e.}, $w_{ij}=\sigma(\Vec{h}_i^{\mathrm{T}}\Vec{h}_j)$). Meanwhile, the term $I(w_{ij}; \Mat{a}_{ij})$ maximizes MI between $w_{ij}$ and the edge weight/feature of input graph (\emph{i.e.}, $\Vec{a}_{ij}$) to enforce $w_{ij}$ to conform to topological relations. In this sense, the degree of the contribution would be consistent with the proximity in topological structure, which is commonly accepted as a fact that $w_{ij}$ could be larger if node $j$ is ``closer'' to node $i$ and smaller otherwise. This strategy compensates for the flaw that FMI only focuses on node features and makes local MIs contribute to the global one adaptively. 
To better understand the idea of attention in this strategy, you could refer to the attention-based GCN~\cite{velivckovic2017graph}.

Note that the definition of Eq.~\eqref{Eq:gmi} is applicable for general cases. For certain specific situations, we can slightly modify Eq.~\eqref{Eq:gmi} for efficiency. For example, when dealing with unweighted graphs (namely the edge value is 1 if connected and 0 otherwise), we could replace the second MI term $I(w_{ij}; \Mat{a}_{ij})$ with a negative cross-entropy loss. Minimizing the cross-entropy also contributes to MI maximization, and it delivers a more efficient computation. We defer more details in the next section.

There are several benefits by the definition of Eq.~\eqref{Eq:gmi}. First, this kind of MI is invariant to the isomorphic transformation of input graphs. Second, it is computationally feasible as each component on the right side can be estimated by MINE. More importantly, GMI is more powerful than DGI in capturing original input information due to its explicit correlation between hidden vectors and input features of both nodes and edges in a fine-grained node-level.

\subsection{Maximization of GMI}
Now we directly maximize the right side of Eq.~\eqref{Eq:gmi} with the help of MINE. Note that MINE estimates a lower-bound of MI with the Donsker-Varadhan (DV)~\cite{donsker1983asymptotic} representation of the KL-divergence between the joint distribution and the product of the marginals.
As we focus more on maximizing MI rather than obtaining its specific value, the other non-KL alternatives such as Jensen-Shannon MI estimator (JSD)~\cite{nowozin2016f} and Noise-Contrastive estimator (infoNCE)~\cite{oord2018representation} could be employed to replace it. Based on the experimental findings and analysis in~\cite{hjelm2018learning}, we resort to JSD estimator in this paper for the sake of effectiveness and efficiency, since infoNCE estimator is sensitive to negative sampling strategies (the number of negative samples) thus may become a bottleneck for large-scale datasets with a fixed available memory. 
On the contrary, the insensitivity of JSD estimator to negative sampling strategies and its respectable performance on many tasks makes it more suitable for our task. In particular, we calculate $I(\Vec{h}_i; \Vec{x}_j)$ in the first term of Eq.~\eqref{Eq:gmi} by 
\begin{eqnarray}
\label{Eq:jsd}
I(\Vec{h}_i; \Vec{x}_j) \ =\  -sp(-\mathcal{D}_{w}(\Vec{h}_{i},\Vec{x}_{j}))-\mathbb{E}_{\tilde{\mathbb{P}}}[sp(\mathcal{D}_{w}(\Vec{h}_{i},\Vec{x'}_{j}))],
\end{eqnarray}
where $\mathcal{D}_{w}: D \times D' \rightarrow \mathbb{R}$ is a discriminator constructed by a neural network with parameter $w$. $\Vec{x'}_{j}$ is an negative sampled from $\tilde{\mathbb{P}}=\mathbb{P}$, and $sp(x)=log(1+e^{x})$ denotes the soft-plus function. 

As mentioned in \textsection~\ref{Sec:GMI}, we maximize $I(w_{ij}; \Mat{a}_{ij})$ via calculating its cross-entropy instead of using JSD estimator since the graphs we coped with in experiments are unweighted. Formally, we compute
\begin{eqnarray}
\label{Eq:cross}
\quad I(w_{ij}; \Mat{a}_{ij}) \ =\  \Mat{a}_{ij}log w_{ij} +(1-\Mat{a}_{ij})log(1-w_{ij}).
\end{eqnarray}
By maximizing $I(\Vec{h}_i; \mathcal{G}_i)$ with the sum of Eq.~\eqref{Eq:jsd} and Eq.~\eqref{Eq:cross} over all hidden vectors $\Mat{H}$, we arrive at our complete objective function for GMI optimization. Besides, we can further add trade-off parameters to balance Eq.~\eqref{Eq:jsd} and~\eqref{Eq:cross} for more flexibility.

\section{Experiments}
In this section, we empirically evaluate the performance of GMI on two common tasks: node classification (transductive and inductive) and link prediction. An additional relatively fair comparison between GMI and another two unsupervised algorithms (EP-B and DGI) further exhibits its effectiveness. Also we provide the visualization of t-SNE plots and analyze the influence of model depth.

%To conduct classification, we refer to the experimental setup in~\cite{kipf2016semi,hamilton2017inductive,velivckovic2018deep}. The proposed model is trained in an unsupervised fashion, and a simple logistic regression classifier is utilized for checking the classification quality of learned representations. For link prediction, we set up the experiment following~\cite{grover2016node2vec}, that is training the model without supervision then predicting the missing links with learned embeddings. 

\subsection{Datasets}
To assess the quality of our approach in each task, we adopt 4 or 5 commonly used benchmark datasets in the previous work~\cite{kipf2016semi,hamilton2017inductive,velivckovic2018deep}. Detailed statistics are given in Table~\ref{table:t1}.

In the classification task, Cora, Citeseer, and PubMed~\cite{sen2008collective}\footnote{https://github.com/tkipf/gcn} are citation networks where nodes correspond to documents and edges represent citations. Each document is associated with a bag-of-words representation vector and belongs to one of the predefined classes. Following the transductive setup in~\cite{kipf2016semi,velivckovic2018deep}, training is conducted on all nodes, and 1000 test nodes are used for evaluation. Reddit\footnote{http://snap.stanford.edu/graphsage/} is a large social network consisting of numerous interconnected Reddit posts created during September 2014~\cite{hamilton2017inductive}. Posts are treated as nodes and edges mean the same user comments. The class label is the community and our objective is to predict which community different posts belong to. PPI\footnotemark[3] is a protein-protein interaction dataset that contains multiple graphs related to different human tissues~\cite{zitnik2017predicting}. The positional gene sets, motif gene sets, and immunological signatures are viewed as node features, and each node has a totally of 121 labels given by gene ontology sets. Classifying protein functions across different PPI graphs is the goal. Following the inductive setup in~\cite{hamilton2017inductive}, on Reddit, we feed posts made in the first 20 days into the model for training, while the remaining are used for testing (with 30$\%$ used for validation); on PPI, there are 20 graphs for training, 2 for validation and 2 for testing. It should be emphasized that, for Reddit and PPI, testing is carried out on unseen (untrained) nodes and graphs, while the first three datasets are used for transductive learning.

In the link prediction task, BlogCatalog\footnote{http://dmml.asu.edu/users/xufei/datasets.html} is a social blogging website where bloggers follow each other and register their blogs under predefined 6 categories. The tags of blogs are taken as node features. Flickr\footnotemark[4] is an image sharing website where users interact with others and form
a social network. Users upload photos with 9 predefined classes and select attached tags to reflect their interests which provide attribute information. The description of Cora and PPI is omitted for brevity. Following the experimental settings and evaluation metrics in~\cite{grover2016node2vec}, given a graph with certain portions of edges removed, we aim to predict these missing links. For Cora, BlogCatalog, and Flickr, we randomly delete 20$\%$, 50$\%$, and 70$\%$ edges while ensuring that the rest of network obtained after the edge removal is connected and use the damaged network for training. About PPI, we directly treat part of the edges not seen during training as prediction targets instead of man-made edge deletion.

\subsection{Experimental Settings}\label{sec:setting}
\textbf{Encoder design.} We resort to a standard Graph Convolutional Network (GCN) model with the following layer-wise propagation rule as the encoder for both classification and link prediction tasks:
\begin{equation}\label{eq:1}
\Mat{H}^{(l+1)}=\sigma(\hat{\Mat{A}}\Mat{H}^{(l)}\Mat{W}^{(l)}),
\end{equation}
where $\hat{\Mat{A}}=\Mat{D}^{-\frac{1}{2}}\bar{\Mat{A}}\Mat{D}^{-\frac{1}{2}}$, $\bar{\Mat{A}}=\Mat{A}+\Mat{I}_{n}$, $\Mat{D}_{ii}=\sum_{j}\bar{\Mat{A}}_{ij}$, $\Mat{H}^{(l)}$ and $\Mat{H}^{(l+1)}$ are the input and output matrices of the $l$-th layer, $\Mat{W}^{(l)}$ is a layer-specific trainable weight matrix. Here the nonlinear transformation $\sigma$ we applied is the PReLU function (parametric ReLU)~\cite{he2015delving}. It should be recognized that for node $i$, the neighborhood $\Mat{X}_{i}$ in its support graph $\mathcal{G}_{i}$ contains node $i$ itself as self-loops are inserted through $\bar{\Mat{A}}$. 

To be more specific, the encoder we employed on Citeseer and PubMed is a one-layer GCN with the output dimension as $D'=512$. And on Cora, Reddit, BlogCatalog, Flickr, and PPI, we utilize a two-layer GCN as our encoder. Here, we have hidden dimensions as $D'=512$ in each GCN layer. Note that utilizing the similar GCN encoder for both transductive and inductive classification task makes our proposed method easier to follow and scale to large networks than DGI, since DGI has to design varying encoders to adapt to distinct learning tasks, especially the encoders used in inductive tasks are too intricate and complicated, which are not friendly to practical applications. 

\noindent\textbf{Discriminator design.} The discriminator in Eq.~\eqref{Eq:jsd} scores the input-output feature pairs through a simple bilinear function, which is similar to the discriminator used in~\cite{oord2018representation}:
\begin{equation}\label{eq:4}
\mathcal{D}(\Vec{h}_i, \Vec{x}_j)=\sigma(\Vec{h}_{i}^{T}\Mat{\Theta}\Vec{x}_{j}),
\end{equation}
where $\Mat{\Theta}$ represents a trainable scoring matrix and the activation function $\sigma$ we employed is the sigmoid aiming at converting scores into probabilities of $(\Vec{h}_{i}, \Vec{x}_{j})$ being a positive example.

\noindent\textbf{Implementation details.} \label{Sec:imp}
Actually, for the weight $w_{ij}$ of the first term in Eq.~\eqref{Eq:gmi}, we have two ways to get its value in experiments. The first is to keep $w_{ij}=\sigma(\Vec{h}_i^{\mathrm{T}}\Vec{h}_j)$, which makes local MIs contribute to the global one adaptively, and we term this variant GMI-adaptive. The other is to let $w_{ij}=\frac{1}{i_n}$, $i.e.$, the left endpoint of the interval where $w_{ij}$ belongs (refer to Theorem~\ref{Th:FMI}), which means the contribution of each local MI is equal, and we term this variant GMI-mean. Here both GMI-mean and GMI-adaptive are included in the scope of comparison with baselines. 

All experiments are implemented in PyTorch~\cite{ketkar2017introduction} with Glorot initialization~\cite{glorot2010understanding} and conducted on a single Tesla P40 GPU. In preprocessing, we perform row normalization on Cora, Citeseer, PubMed, BlogCatalog, and Flickr following~\cite{kipf2016semi}, and apply the processing strategy in~\cite{hamilton2017inductive} on Reddit and PPI. During training, we use Adam optimizer~\cite{kingma2014adam} with an initial learning rate of 0.001 on all seven datasets. Suggested by~\cite{velivckovic2018deep}, we adopt an early stopping strategy with a window size of 20 on Cora, Citeseer, and PubMed, while training the model for a fixed number of epochs on the inductive datasets (20 on Reddit, 50 on PPI). The number of negative samples is set to 5. Due to the large scale of Reddit and PPI, we need to use the subsampling skill introduced in~\cite{hamilton2017inductive} to make them fit into GPU memory. In detail, a minibatch of 256 nodes is first selected, and then for each selected node, we uniformly sample 8 and 5 neighbors at its first and second-level neighborhoods, respectively. We adopt the one-hop neighborhood to construct the support graph in experiments and utilize $\Mat{X}\Mat{W}^{(0)}$ (\emph{i.e.}, a compressed input feature) to calculate FMI since using the original input feature $\Mat{X}$ causes GPU memory overflow. The trade-off parameters are tuned in the range of [0,1] to balance Eq.~\eqref{Eq:jsd} and Eq.~\eqref{Eq:cross}. The Batch Normalization strategy ~\cite{ioffe2015batch} is employed to train our model on Reddit and PPI. 

\noindent\textbf{Evaluation metrics.} For the classification task, we provide the learned embeddings across the training set to the logistic regression classifier and give the results on the test nodes~\cite{kipf2016semi,hamilton2017inductive}. Specifically, in transductive learning, we adopt the mean classification accuracy after 50 runs to evaluate the performance, while the micro-averaged F1 score averaged after 50 runs is used in inductive learning. And for PPI, suggested by~\cite{velivckovic2018deep}, we standardize the learned embeddings before feeding them into the logistic regression classifier. For the link prediction task, the criteria we adopted is AUC which is the area under the ROC curve. The negative samples involved in the calculation of AUC are generated by randomly selecting an equal number of node pairs with no connections in the original graph. The closer the AUC score approaches 1, the better the performance of the algorithm is. Similarly, we report the AUC score averaged after 10 runs.

%In order to ensure maximum effectiveness and efficiency of the proposed method in practical applications, in the experiments, we slightly adjust the calculation process of FMI in Theorem 1. In particular, we first fuse the neighborhood features of node $i$ ($i.e.$, $\Mat{X}_{i}$) in its supported graph $\mathcal{G}_{i}$ into a single feature vector and then estimate the MI between it and $\Vec{h}_{i}$. 
% In a formal manner, we adjust $\sum\nolimits_{j}^{i_n} w_{ij} I(\Vec{h}_i; \Mat{x}_{j})$ to $I(\Vec{h}_i; \sum\nolimits_{j}^{i_n} w_{ij}\Mat{x}_{j})$ that involves only one calculation of MI. 
%Compared with computing the weighted sum of many local MIs, which is compute-intensive and time-consuming especially in large graphs, the adjustment makes our method more adaptable to large networked data and thus more friendly to practical applications in spite of the fact that there might be a slight loss of information during fusion.
\begin{table*}[t]
	\centering
	\tabcolsep=11 pt  
	\caption{Classification accuracies (with standard deviation) in percent on transductive tasks and micro-averaged F1 scores on inductive tasks. The third column illustrates the data used by each algorithm in the training phase, where $\mathbf{X}$, $\mathbf{A}$, and $\mathbf{Y}$ denotes features, adjacency matrix, and labels, respectively.}\label{table:t2} 
	\begin{tabular}{clcccc}
		\toprule
		& \multirow{2}{*}{\textbf{Algorithm}} &\textbf{Training data} & \multicolumn{3}{c}{\textbf{Transductive tasks}}\\
		&&$\mathbf{X\qquad A\qquad Y}$&\textbf{Cora}&\textbf{Citeseer}&\textbf{PubMed}\\
		\midrule
		\multirow{5}{*}{\textbf{Unsupervised}}& Raw features &\checkmark \qquad \ \; \qquad \ \;  & 56.6 $\pm$ 0.4 & 57.8 $\pm$ 0.2 & 69.1 $\pm$ 0.2\\
		&DeepWalk & \qquad \ \; \checkmark \qquad \ \; & 67.2 & 43.2 & 65.3\\
		&EP-B & \checkmark \qquad \checkmark \qquad \ \;& 78.1 $\pm$ 1.5 &  71.0 $\pm$ 1.4 & 79.6 $\pm$ 2.1\\
		&DGI & \checkmark \qquad \checkmark \qquad \ \; & 82.3 $\pm$ 0.6 & 71.8 $\pm$ 0.7 & 76.8 $\pm$ 0.6\\
		&\textbf{GMI-mean} (ours) & \checkmark \qquad \checkmark \qquad \ \; & \textbf{82.7} $\pm$ \textbf{0.2} & \textbf{73.0} $\pm$ \textbf{0.3} & \textbf{80.1} $\pm$ \textbf{0.2}\\
		&\textbf{GMI-adaptive} (ours) & \checkmark \qquad \checkmark \qquad \ \; & \textbf{83.0} $\pm$ \textbf{0.3}& \textbf{72.4} $\pm$ \textbf{0.1}& \textbf{79.9} $\pm$ \textbf{0.2} \\
		\midrule
		\multirow{4}{*}{\textbf{Supervised}}&LP & \qquad \ \; \checkmark \qquad \checkmark & 68.0 & 45.3 & 63.0\\
		&Planetoid-T & \checkmark \qquad \checkmark \qquad \checkmark & 75.7 & 62.9 & 75.7\\
		&GCN & \checkmark \qquad \checkmark \qquad \checkmark & 81.5 & 70.3 & 79.0\\
		&GAT & \checkmark \qquad \checkmark \qquad \checkmark & 83.0 $\pm$ 0.7 & 72.5 $\pm$ 0.7 & 79.0 $\pm$ 0.3\\
		&GWNN & \checkmark \qquad \checkmark \qquad \checkmark & 82.8  & 71.7 & 79.1\\
		\midrule
		&\multirow{2}{*}{\textbf{Algorithm}} &\textbf{Training data}& \multicolumn{3}{c}{\textbf{Inductive tasks}}\\
		&&$\mathbf{X\qquad A\qquad Y}$& \multicolumn{2}{c}{\textbf{Reddit}}&\textbf{PPI}\\
		\midrule
		\multirow{9}{*}{\textbf{Unsupervised}}&Raw features & \checkmark \qquad \ \; \qquad \ \; & \multicolumn{2}{c}{58.5} & 42.2\\
		&DeepWalk & \qquad \ \; \checkmark \qquad \ \; & \multicolumn{2}{c}{32.4} & -\\
		&DeepWalk+features & \checkmark \qquad \checkmark \qquad \ \; & \multicolumn{2}{c}{69.1} & -\\
		&GraphSAGE-GCN & \checkmark \qquad \checkmark \qquad \ \; & \multicolumn{2}{c}{90.8} & 46.5 \\
		&GraphSAGE-mean & \checkmark \qquad \checkmark \qquad \ \; & \multicolumn{2}{c}{89.7} & 48.6\\
		&GraphSAGE-LSTM & \checkmark \qquad \checkmark \qquad \ \; & \multicolumn{2}{c}{90.7} & 48.2\\
		&GraphSAGE-pool & \checkmark \qquad \checkmark \qquad \ \; & \multicolumn{2}{c}{89.2} & 50.2\\
		&DGI & \checkmark \qquad \checkmark \qquad \ \; & \multicolumn{2}{c}{94.0 $\pm$ 0.10} & 63.8 $\pm$ 0.20\\
		&\textbf{GMI-mean} (ours) & \checkmark \qquad \checkmark \qquad \ \; & \multicolumn{2}{c}{\textbf{95.0} $\pm$ \textbf{0.02}} & \textbf{65.0} $\pm$ \textbf{0.02}\\
		&\textbf{GMI-adaptive} (ours) & \checkmark \qquad \checkmark \qquad \ \; & \multicolumn{2}{c}{\textbf{94.9} $\pm$ \textbf{0.02}}& \textbf{64.6} $\pm$ \textbf{0.03}\\
		\midrule
		\multirow{3}{*}{\textbf{Supervised}}&GAT  & \checkmark \qquad \checkmark \qquad \checkmark & \multicolumn{2}{c}{-} & 97.3 $\pm$ 0.20 \\
		&FastGCN & \checkmark \qquad \checkmark \qquad \checkmark & \multicolumn{2}{c}{93.7} & -\\
		&GaAN  & \checkmark \qquad \checkmark \qquad \checkmark & \multicolumn{2}{c}{96.4 $\pm$ 0.03} & 98.7 $\pm$ 0.02\\
		\bottomrule
	\end{tabular}
\end{table*}

\subsection{Classification}
\textbf{Transductive learning.} Table~\ref{table:t2} reports the mean classification accuracy of our method and other baselines on transductive tasks. Here, results for EP-B~\cite{duran2017learning}, DGI~\cite{velivckovic2018deep}, Planetoid-T~\cite{yang2016revisiting}, GAT~\cite{velivckovic2017graph}, as well as GWNN~\cite{xu2018graph} are taken from their original papers, and results for DeepWalk~\cite{perozzi2014deepwalk}, LP (Label Propagation)~\cite{zhu2003semi}, and GCN~\cite{kipf2016semi} are copied from Kipf \& Welling~\cite{kipf2016semi}. As for raw features, we feed them into a logistic regression classifier for training and give the results on the test features\footnote{Strictly speaking, this experiment belongs to the inductive learning as testing is conducted on unseen features. But for comparison, we put it in this part.}. Although we provide experimental results of both supervised and unsupervised methods, in this paper, we focus more on comparing against unsupervised ones which are consistent with our setup.

As can be observed, our proposed GMI-mean and GMI-adaptive, compared with other unsupervised methods, achieve the best classification accuracy across all three datasets. We consider this strong performance benefits from the idea of attempting to directly maximize graphical MI between input and output pairs of the encoder $\mathcal{E}$ at a fine-grained node-level. Therefore, the encoded representation maximally preserves the information of node features and topology in $\mathcal{G}$, which contributes to classification. By contrast, EP-B ignores the underlying information between input data and learned representations, and DGI stays in a graph/patch-level MI maximization, which restricts their capability of preserving and extracting the original input information into embedding space. Thus slightly weak performance on classification tasks. Besides, without the guidance of labels, our method exhibits comparable results to some supervised models like GCN and GAT, even better results than them on Citeseer and PubMed. We believe that representations learned via GMI maximization between inputs and outputs inherit the rich information in graph $\mathcal{G}$ which is enough for classification. More notable is that many available labels are given based on the information in $\mathcal{G}$ as well. So keeping as much information as possible from the input can compensate for the information provided by the label to some extent, which sustains the performance of GMI in downstream graph mining tasks. It could be claimed that learning from original inputs without labels promises the potential for higher quality representations than the supervised pattern as the extreme sparsity of the training labels may suffer from the threat of overfitting or the correctness of given labels might not be reliable.

\begin{table*}[t]
	\centering
	% 	\vskip -0.2in
	% 	\renewcommand{\arraystretch}{0.93}
	\tabcolsep=17 pt 
	\caption{Comparison with different objective functions in terms of classification accuracies (with standard deviation) in percent on Cora, Citeseer, and PubMed, and micro-averaged F1 scores in percent on Reddit and PPI.}\label{table:t4}  
	\begin{tabular}{cccccc}
		\toprule
		\multirow{2}{*}{\textbf{Algorithm}} & \multicolumn{3}{c}{\textbf{Transductive}} & \multicolumn{2}{c}{\textbf{Inductive}}\\
		&\textbf{Cora} & \textbf{Citeseer} & \textbf{PubMed} & \textbf{Reddit} & \textbf{PPI}\\
		\midrule
		EP-B loss & 79.4 $\pm$ 0.1 & 69.3 $\pm$ 0.2 & 78.6 $\pm$ 0.2 & 93.8 $\pm$ 0.03 & 61.8 $\pm$ 0.04\\
		DGI loss & 82.2 $\pm$ 0.2 & 72.2 $\pm$ 0.2 & 78.9 $\pm$ 0.3 & 94.3 $\pm$ 0.02 & 62.3 $\pm$ 0.02\\
		\textbf{FMI} (ours) & 78.3 $\pm$ 0.1 & 72.0 $\pm$ 0.2 & \textbf{79.1} $\pm$ \textbf{0.3} & \textbf{94.7} $\pm$ \textbf{0.03} & \textbf{64.8} $\pm$ \textbf{0.03}\\
		% 		TMI only & 78.9 $\pm$ 0.2 & 71.4 $\pm$ 0.3 & 78.5 $\pm$ 0.2 & 94.8 $\pm$ 0.03 & 61.9 $\pm$ 0.03\\
		% \textbf{GMI-adaptive} & \textbf{83.0} $\pm$ \textbf{0.2} & \textbf{72.4} $\pm$ \textbf{0.1} & 78.8 $\pm$ 0.1 & \textbf{94.9} $\pm$ \textbf{0.02} & \textbf{64.6} $\pm$ \textbf{0.01}\\
		\textbf{GMI-mean} (ours) & \textbf{82.7} $\pm$ \textbf{0.1} & \textbf{73.0} $\pm$ \textbf{0.3} & \textbf{80.1} $\pm$ \textbf{0.2} & \textbf{95.0} $\pm$ \textbf{0.02} & \textbf{65.0} $\pm$ \textbf{0.02}\\
		\textbf{GMI-adaptive} (ours) & \textbf{83.0} $\pm$ \textbf{0.3}& \textbf{72.4} $\pm$ \textbf{0.1}& \textbf{79.9} $\pm$ \textbf{0.2}& \textbf{94.9} $\pm$ \textbf{0.02}& \textbf{64.6} $\pm$ \textbf{0.03}\\
		\bottomrule
	\end{tabular}
	% 	\vskip -0.1in
\end{table*}

\begin{table*}[t]
	\centering
	\caption{AUC scores (with standard deviation) in percent for link prediction. The percentage is the ratio of edge removal.}
	\begin{tabular}{ccccccccccc}
		\toprule
		%			&\multicolumn{4}{c|}{transductive}&inductive\\
		%			\midrule
		\multirow{2}{*}{\textbf{Algorithm}}&\multicolumn{3}{c}{\textbf{Cora}}&\multicolumn{3}{c}{\textbf{BlogCatalog}}&\multicolumn{3}{c}{\textbf{Flickr}}&\textbf{PPI}\\
		& \textbf{20.0$\%$} & \textbf{50.0$\%$} & \textbf{70.0$\%$} & \textbf{20.0$\%$} & \textbf{50.0$\%$} & \textbf{70.0$\%$} & \textbf{20.0$\%$} & \textbf{50.0$\%$} & \textbf{70.0$\%$} & \textbf{22.7$\%$} \\
		\midrule
		DGI & 95.6$\pm$0.3 & 94.6$\pm$0.4 & 94.4$\pm$0.2 & 77.2$\pm$0.4& 76.4$\pm$0.4& 75.5$\pm$0.3 & 90.3$\pm$0.3 & 89.0$\pm$0.4 & 74.1$\pm$0.7 &77.4$\pm$0.1\\
		\textbf{FMI} (ours)  & \textbf{97.2}$\pm$\textbf{0.2} & \textbf{95.2}$\pm$\textbf{0.1} & \textbf{95.0}$\pm$\textbf{0.1} & \textbf{81.2}$\pm$\textbf{0.2} & \textbf{79.5}$\pm$\textbf{0.4} & 75.1$\pm$0.2 & \textbf{92.7}$\pm$\textbf{0.3} & \textbf{92.2}$\pm$\textbf{0.3} & \textbf{90.6}$\pm$\textbf{0.4} &\textbf{79.8}$\pm$0.2\\
		\textbf{GMI} (ours) & \textbf{97.9}$\pm$\textbf{0.3} & \textbf{96.4}$\pm$\textbf{0.2} & \textbf{96.3}$\pm$\textbf{0.1} & \textbf{84.1}$\pm$\textbf{0.3} & \textbf{83.6}$\pm$\textbf{0.2} & \textbf{82.5}$\pm$\textbf{0.1} & \textbf{92.0}$\pm$\textbf{0.2} & \textbf{90.1}$\pm$\textbf{0.3} & \textbf{88.5}$\pm$\textbf{0.2} & \textbf{80.0}$\pm$\textbf{0.2}\\
		\bottomrule
	\end{tabular}\label{table:t5}
\end{table*}

\textbf{Inductive learning.} Table~\ref{table:t2} also summarizes the micro-averaged F1 scores of GMI and other baselines on Reddit and PPI. We cite the results of DGI, GAT, FastGCN~\cite{chen2018fastgcn}, and GaAN~\cite{zhang2018gaan} in their original papers, while results for the rest seven compared methods are extracted from Hamilton et al.~\cite{hamilton2017inductive} (here we reuse the unsupervised GraphSAGE results to match our setup). Similarly, the comparison with unsupervised algorithms is the emphasis of our work.

GMI-mean and GMI-adaptive successfully outperform all other competing unsupervised algorithms on Reddit and PPI, which substantiates the effectiveness of GMI maximization in the inductive classification domain (generalization to unseen nodes). Interestingly, the result of our method on Reddit is competitive with some advanced supervised models, but the situation on PPI is quite different. After conducting further analysis, we note that 42$\%$ of nodes have zero feature values in PPI, which means the feature matrix $\mathbf{X}$ is very sparse~\cite{hamilton2017inductive}. In this case, directly and merely relying on input graph $\mathcal{G}=(\Mat{X},\Mat{A})$ limits the performance of unsupervised approaches including DGI and our method, whereas learning in a supervised fashion exhibits much better performance due to the auxiliary information brought by additional labels.

\textbf{Evaluation on two variants of GMI.} According to Table~\ref{table:t2}, the two variants of GMI (GMI-mean and GMI-adaptive), which use different strategies to measure the contribution of each local MI (details in \textsection~\ref{sec:setting}), achieve competitive results with each other, but GMI-adaptive exhibits slightly weaker performance than GMI-mean. Through further analysis, we assume that it might be due to the difficulties in training brought by the nature of adaptive learning. Maybe the performance of GMI-adaptive could be improved with the help of an advanced training strategy. In this sense, GMI-mean is more practical and feasible, thus it can be regarded as a representative in practice.

\subsection{Effectiveness of Objective Function}
To further clarify the effectiveness of maximizing graphical MI in unsupervised graph representation learning and provide a relatively fair comparison with DGI and EP-B (two unsupervised algorithms), we replace our objective function with their loss functions, respectively, while keeping other experimental settings unchanged. Table~\ref{table:t4} lists the results under the transductive and inductive setup. As can be observed, GMI (GMI-mean and GMI-adaptive) achieves stronger performance across all five datasets, which reflects DGI and EP-B lack some consideration in graph representation learning task. Specifically, EP-B loss only imposes constraints on each node and its neighbors at the output level (embedding space), it ignores the interaction between input and output pairs of the encoder, which results in its poor ability to retain the valid information in $\mathcal{G}$. For DGI, although it correlates hidden representations with their original input features implicitly, it discusses MI at the graph/patch-level which is somewhat coarse. Interestingly, compared with DGI, our FMI (without topology information) gains improvements more significantly with the increase of graph size. We attribute this discovery to the fact that the performance degradation of the readout function makes DGI lose certain useful information for node classification with the increase in graph size, although it exhibits good performance on small graphs such as Cora and Citeseer. When the topology of input graph is reflected, our GMI outperforms all other kinds of losses on all datasets. Furthermore, note that the whole training process of GMI is similar to the training of discriminatorys in generative models~\cite{goodfellow2014generative,nowozin2016f}, and GMI empirically exhibits a comparable training speed with EP-B and DGI on the largest dataset Reddit, which demonstrates its good scalability.

\begin{table*}[t]
	\centering
	\caption{Visualization of t-SNE embeddings from raw features, FMI, a learned GMI, and DGI on Cora and Citeseer.}
	\begin{tabular}{ccccc}
		\toprule
		&\textbf{Raw features}&\textbf{FMI (ours)}&\textbf{GMI (ours)}&\textbf{DGI}\\
		\midrule
		\multirow{8}{*}{\textbf{Cora}}&\multirow{8}{*}{\begin{minipage}[t]{0.2\linewidth}
				\includegraphics[width=1.6in,height=1.25in]{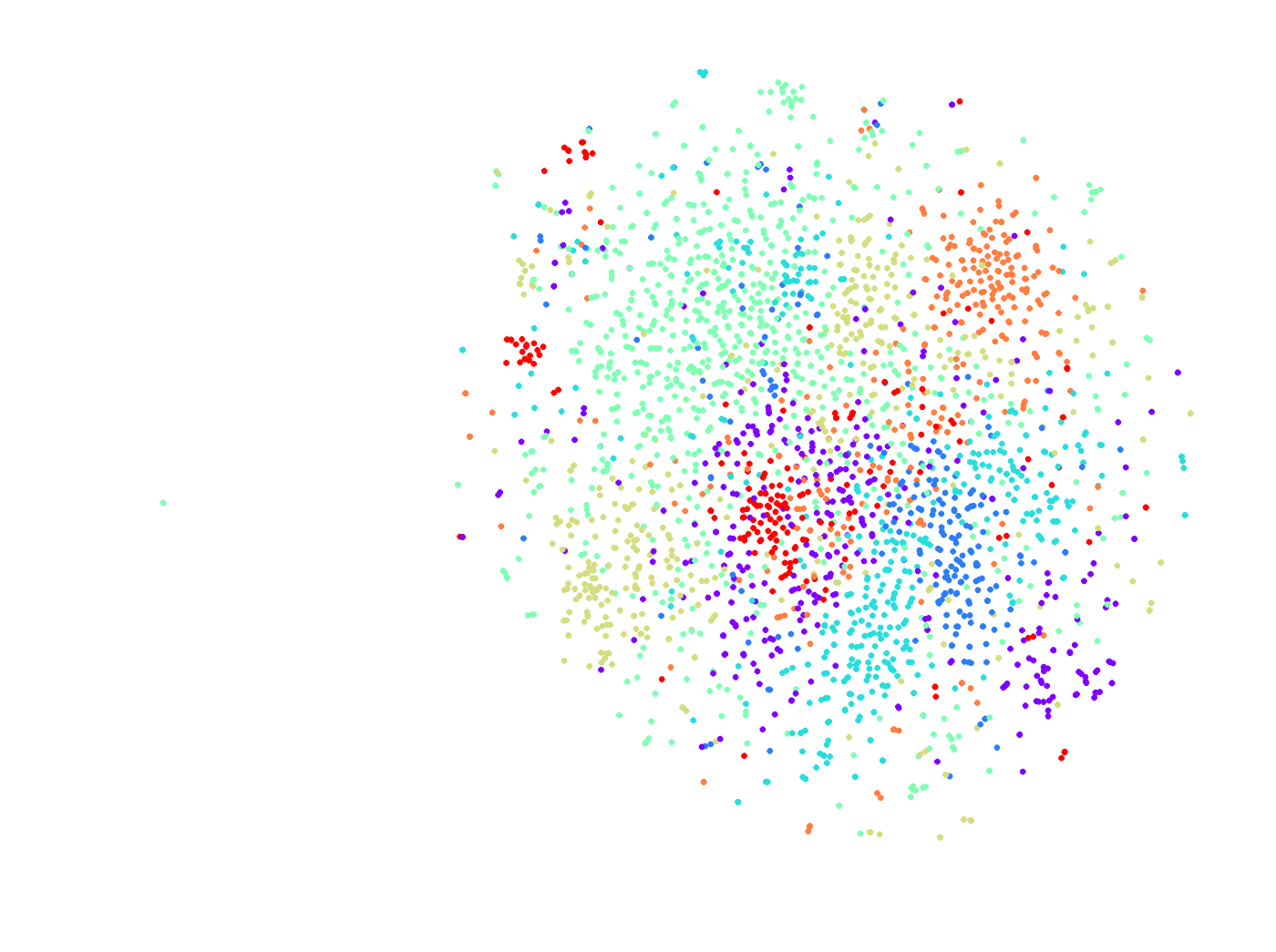}
		\end{minipage}}&\multirow{8}{*}{\begin{minipage}[t]{0.2\linewidth}
				\includegraphics[width=1.6in,height=1.25in]{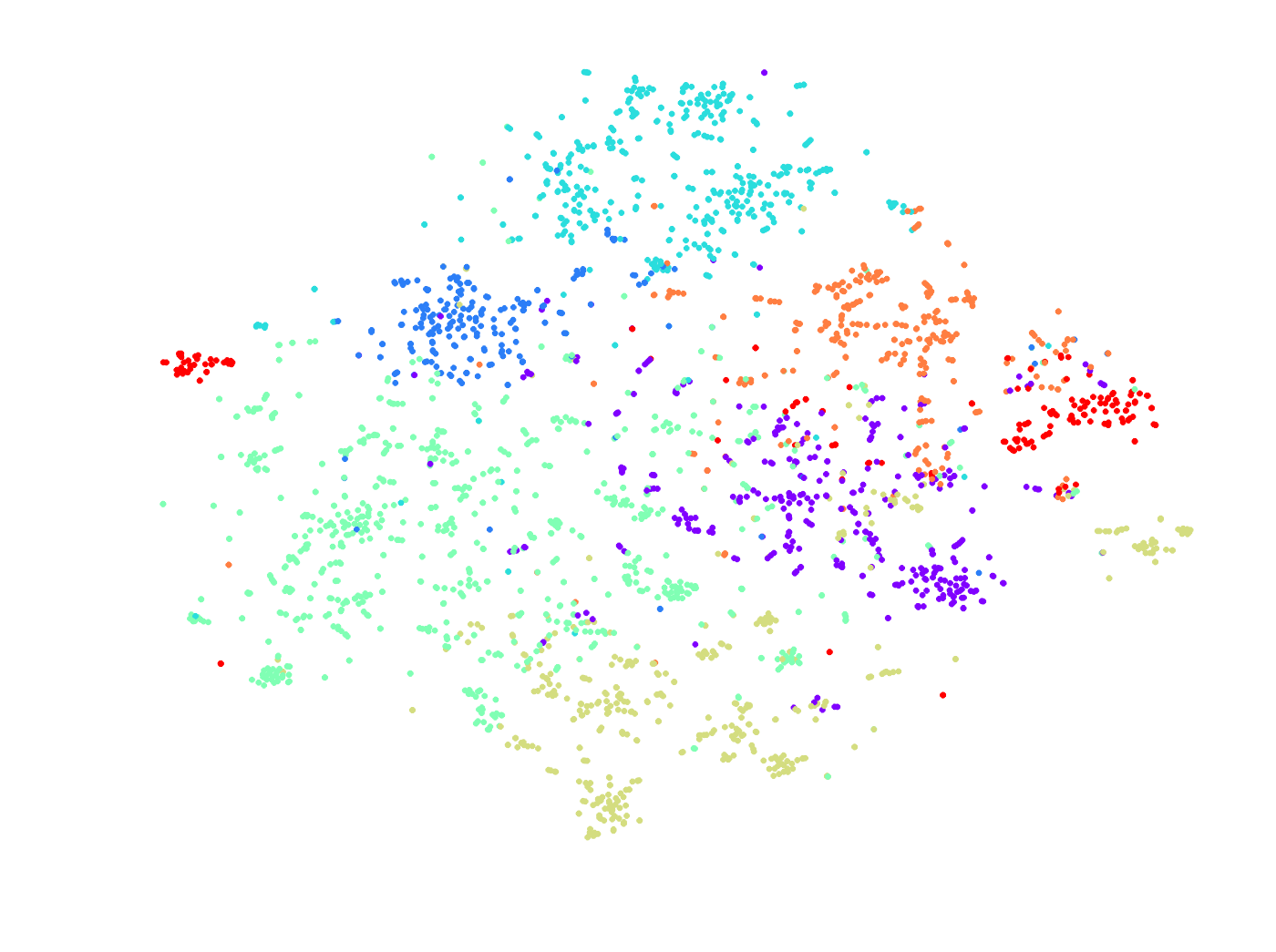}
		\end{minipage}}&\multirow{8}{*}{\begin{minipage}[t]{0.2\linewidth}
				\includegraphics[width=1.6in,height=1.25in]{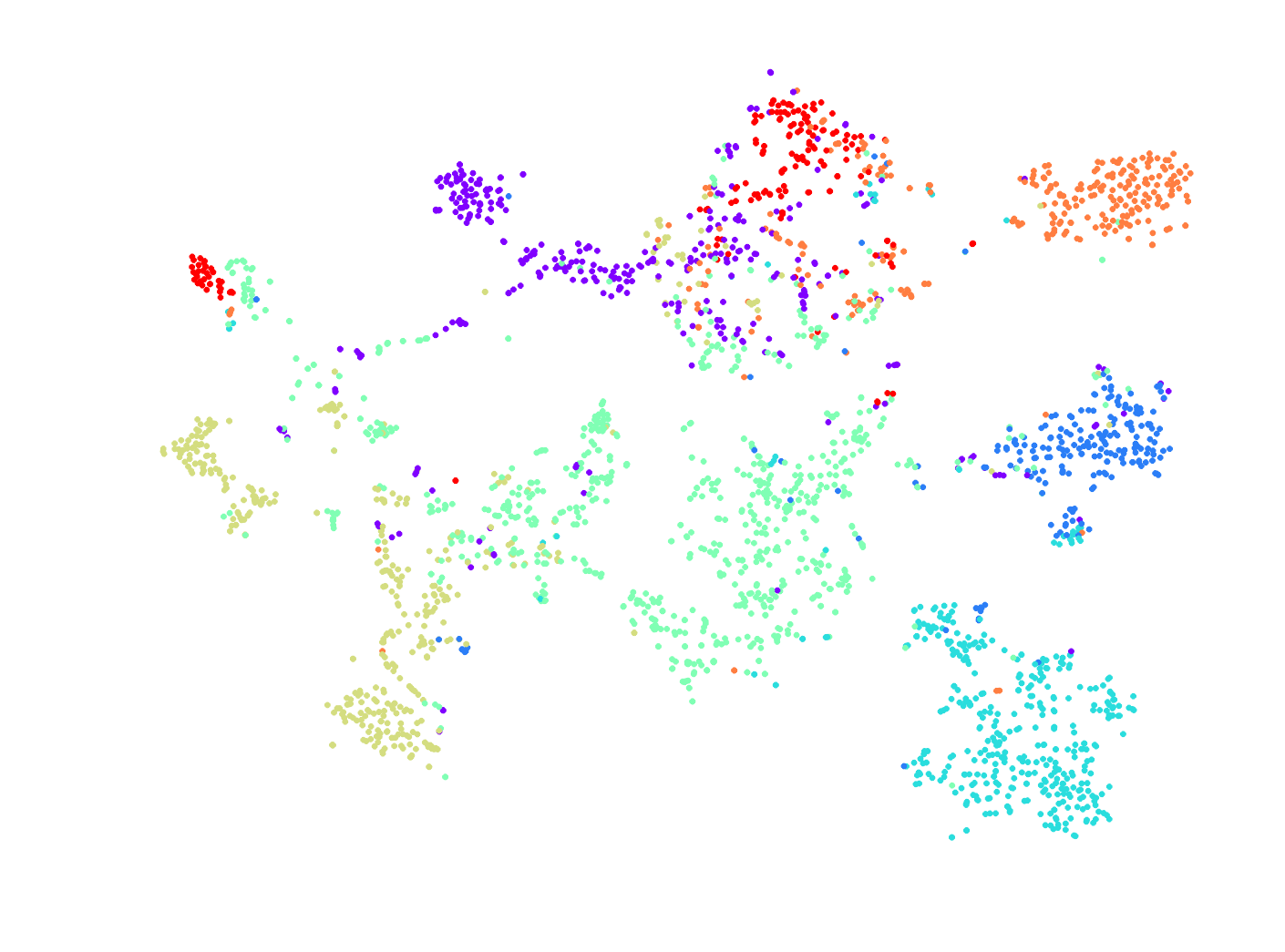}
		\end{minipage}}&\multirow{8}{*}{\begin{minipage}[t]{0.2\linewidth}
				\includegraphics[width=1.6in,height=1.25in]{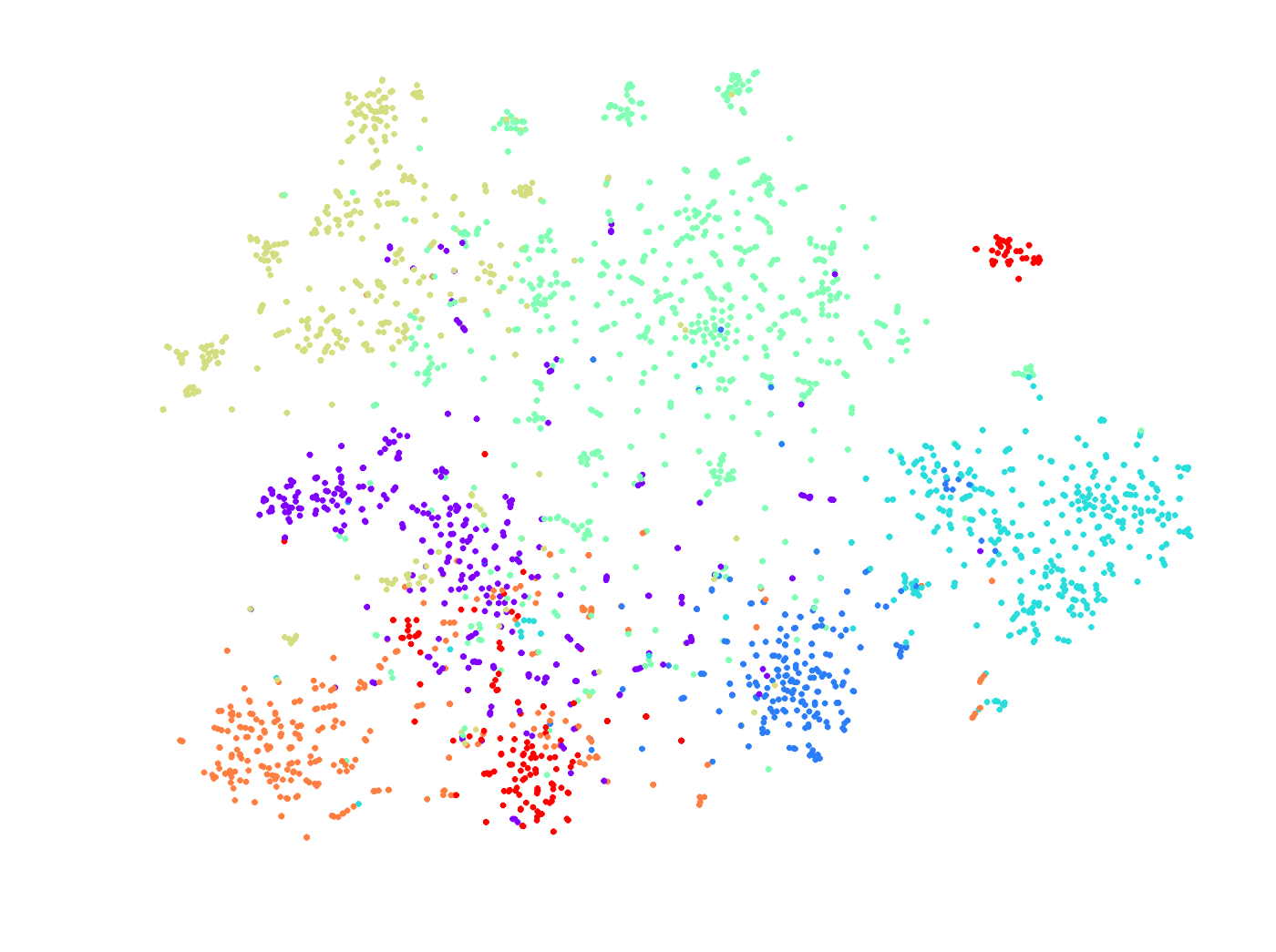}
		\end{minipage}}\\
		&&&&\\
		&&&&\\
		&&&&\\
		&&&&\\
		&&&&\\
		&&&&\\
		&&&&\\
		\midrule
		\multirow{8}{*}{\textbf{Citeseer}}&\multirow{8}{*}{\begin{minipage}[t]{0.2\linewidth}
				\includegraphics[width=1.6in,height=1.25in]{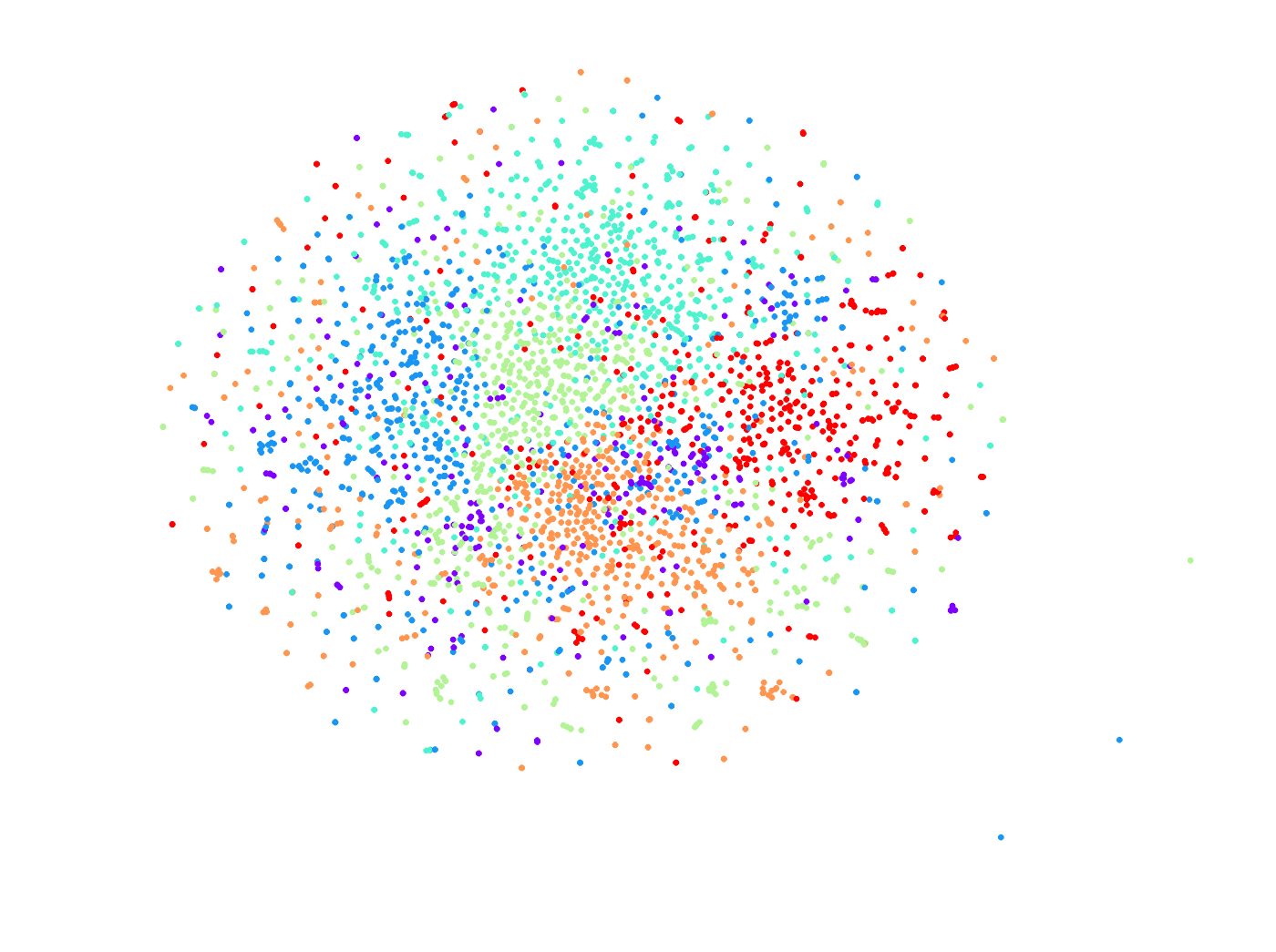}
		\end{minipage}}&\multirow{8}{*}{\begin{minipage}[t]{0.2\linewidth}
				\includegraphics[width=1.6in,height=1.25in]{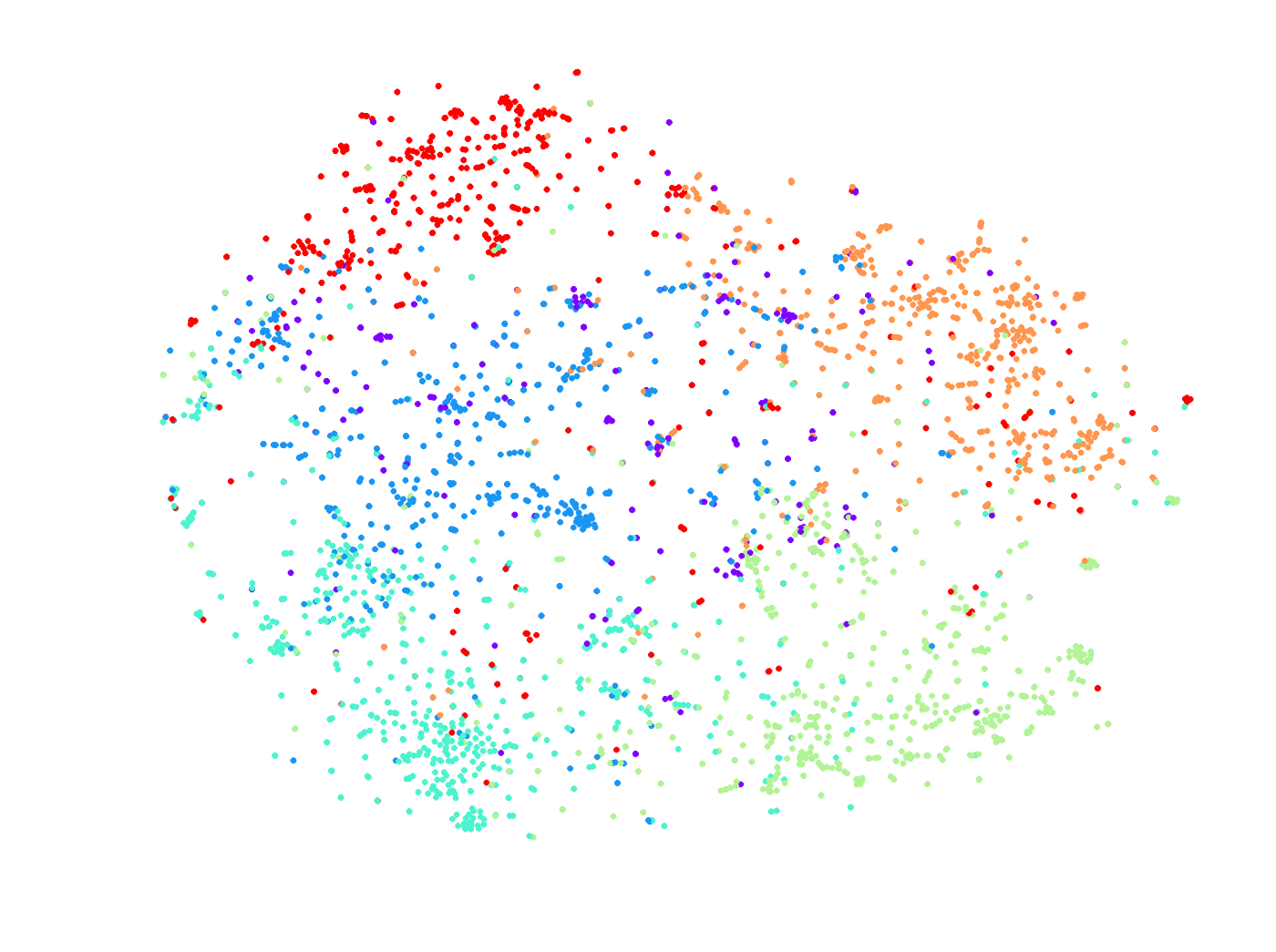}
		\end{minipage}}&\multirow{8}{*}{\begin{minipage}[t]{0.2\linewidth}
				\includegraphics[width=1.6in,height=1.25in]{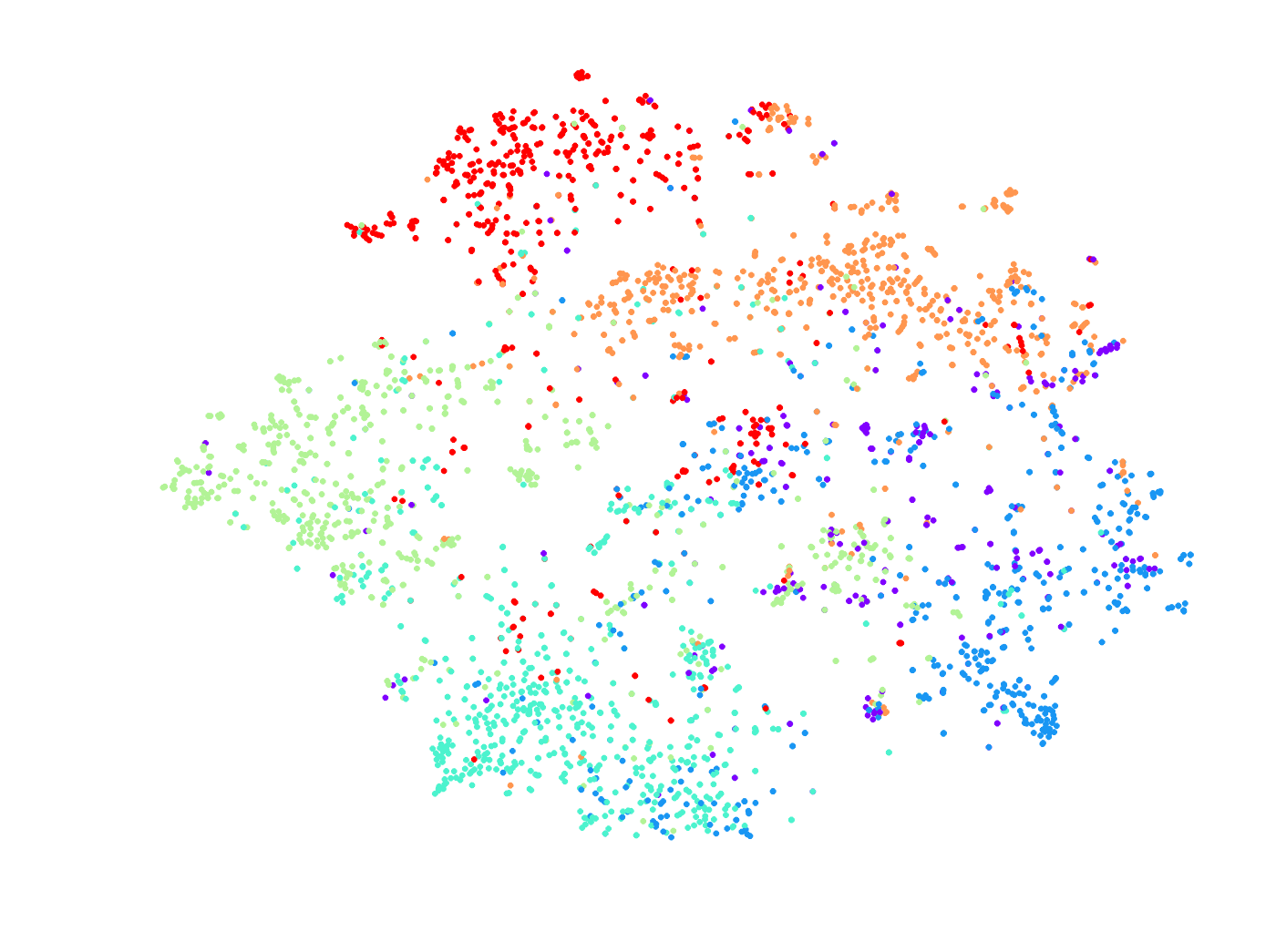}
		\end{minipage}}&\multirow{8}{*}{\begin{minipage}[t]{0.2\linewidth}
				\includegraphics[width=1.6in,height=1.25in]{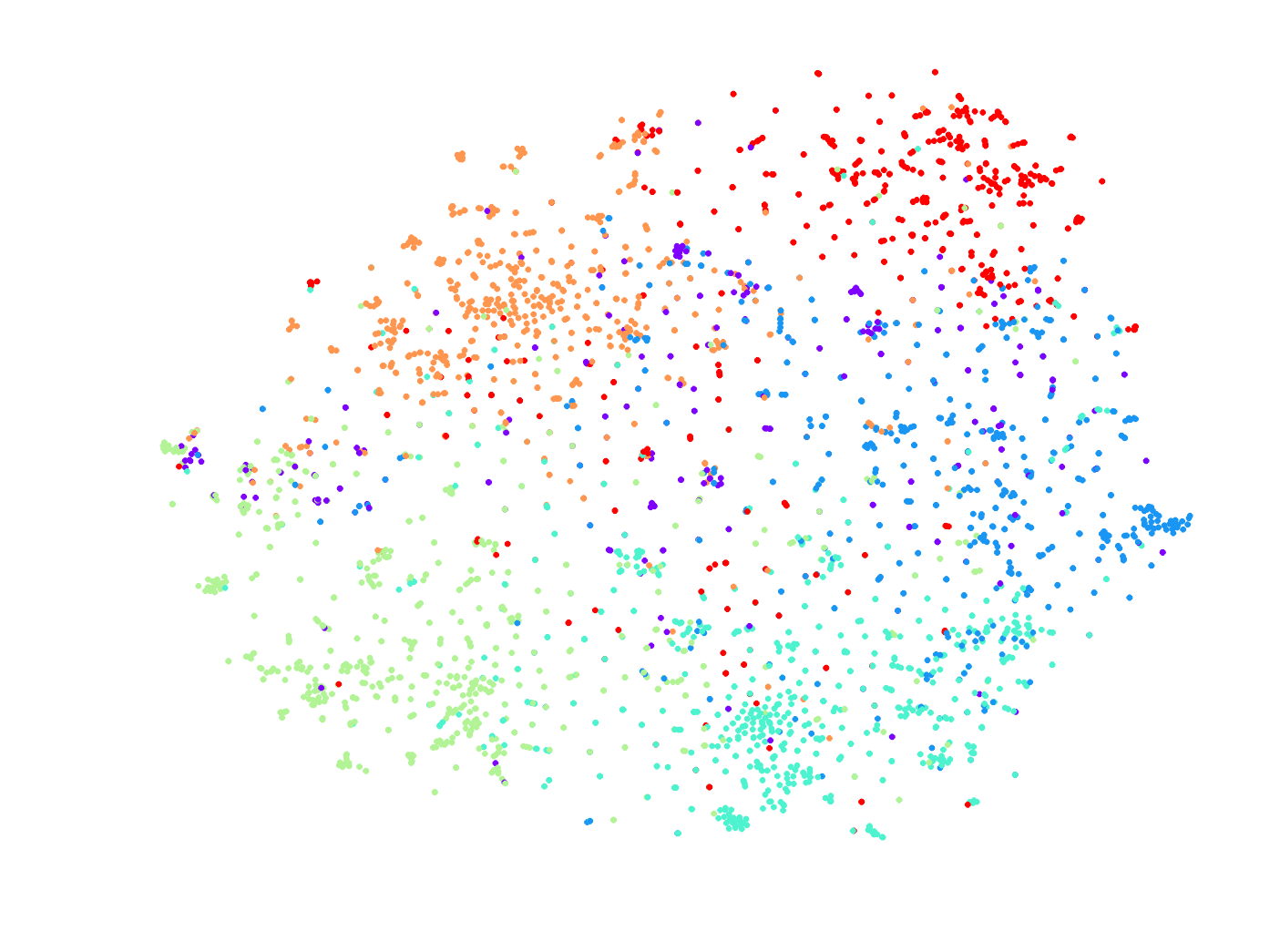}
		\end{minipage}}\\
		&&&&\\
		&&&&\\
		&&&&\\
		&&&&\\
		&&&&\\
		&&&&\\
		&&&&\\
		\bottomrule
	\end{tabular}\label{table:t6}
\end{table*}

\subsection{Link Prediction}
Based on the above experimental results, we find that DGI is a strong competitor to GMI in the scope of unsupervised algorithms. Therefore, in this section, we intend to further investigate the performance of DGI and GMI in another mining task---link prediction. Here we choose FMI and GMI-mean to compare with DGI. Table~\ref{table:t5} reports their AUC scores on four different datasets. Under different edge removal rates, GMI and FMI both remarkably outperform DGI (except FMI in 70.0$\%$ BlogCatalog), showing that measuring graphical MI between input graph and output representations in a fine-grained pattern is capable of capturing rich information in inputs and delivering good generalization ability. About DGI, for one thing, its graph/patch-level MI maximization which is relatively coarse limits its performance in such a fine link prediction task; for another, the inappropriateness of corruption function weakens the ability of DGI to learn accurate representations to predict missing links. Recall that the negative sample for the discriminator in DGI is generated by corrupting the original input graph, and a well-designed corruption function is indispensable which needs some skillful strategies~\cite{velivckovic2018deep}. In this task, we still adopt feature shuffling function which shows the best results in the classification task to build negative samples. But in the case where an input graph is incomplete in terms of topological links, the guidance provided by this corrupted graph as a negative label in the discriminator becomes unreliable due to the inaccuracy of input graph, leading to poor performance. Therefore, the necessity of task-oriented corruption function is a weakness of DGI. In contrast, our GMI is free from this issue by eliminating the corruption function and directly maximizing graphical MI between inputs and outputs of the encoder. Furthermore, it can be observed that FMI is competitive to GMI in most cases, even on Flickr FMI is superior to GMI. We assume it to the benefits brought by the direct and elaborate feature mutual information maximization at a node-level. Based on the Homophily hypothesis~\cite{mcpherson2001birds} (\emph{i.e.}, entities in the network with similar features are likely to interconnect), the input feature information preserved in learned embeddings makes FMI owns the good capability of inferring missing links. 
\begin{figure}[t] 
	\centering 
	\includegraphics[width=0.48\textwidth]{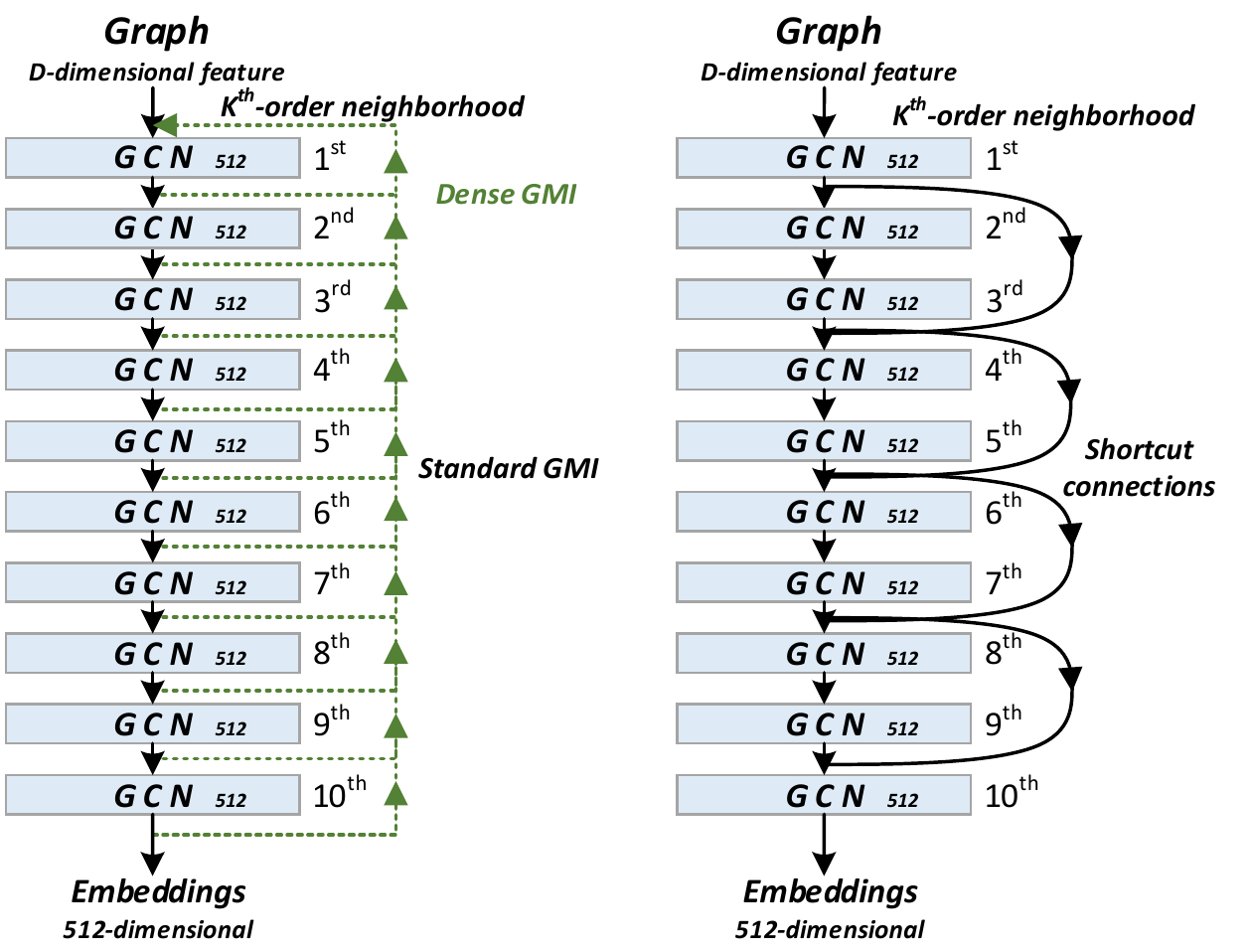}
	\caption{Network architecture for standard GMI model (\textbf{left}) and a residual variant (\textbf{right}). Besides, for the standard GMI model, we have a variant that achieves GMI maximization between the output of each layer and input graph, called as dense GMI.} 
	\label{fig:graph3}
	\vskip -0.2in
\end{figure}

\begin{figure*}[t]
	\begin{minipage}[t]{0.4\linewidth}
		\centering
		\includegraphics[width=2.5in,height=1.78in]{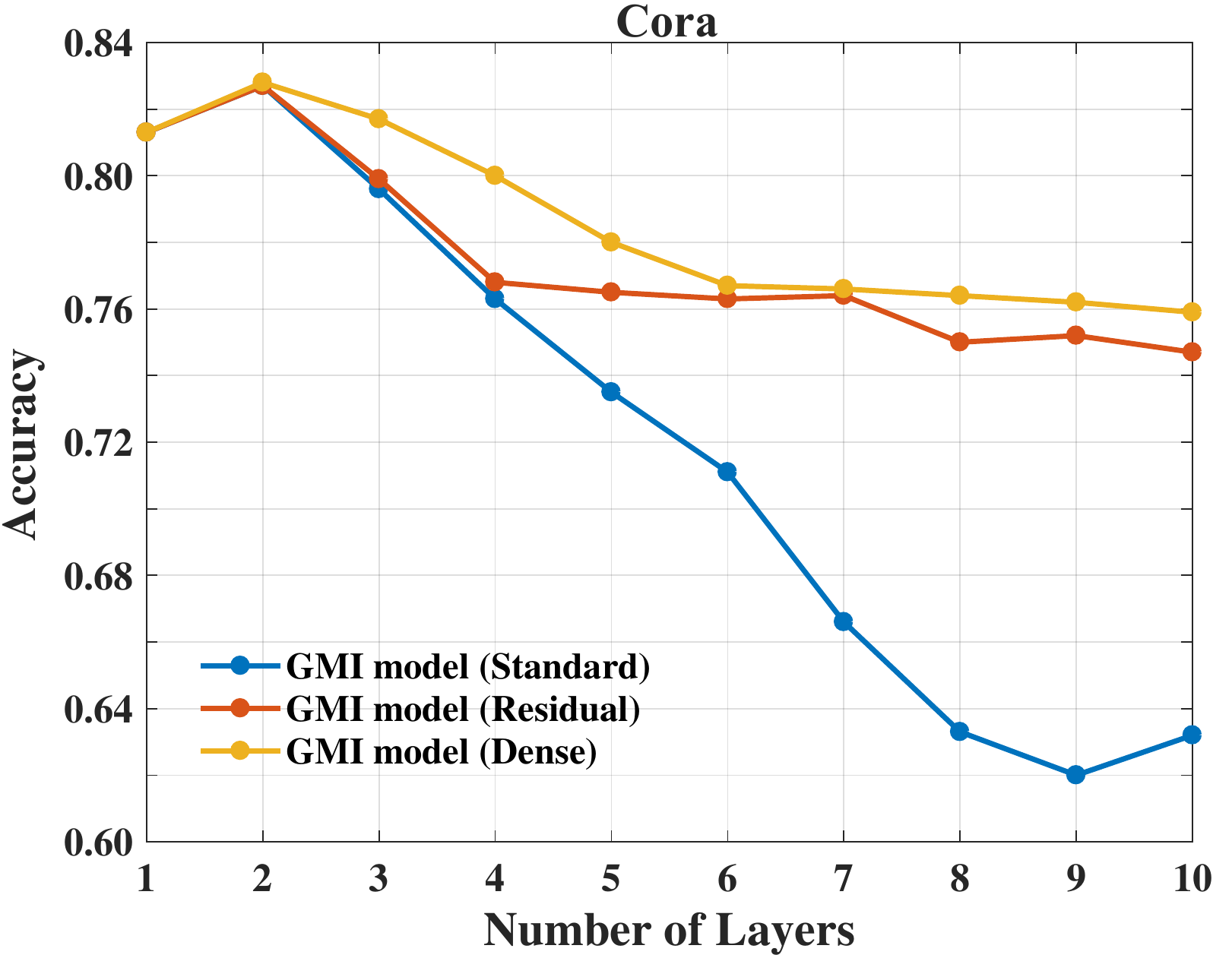}
		% 		\captionsetup{font={footnotesize}}
		% 		\captionsetup{justification=centering}
		% 		\centerline{\textbf{\quad(a)\ \ Cora}}
	\end{minipage}
	\begin{minipage}[t]{0.4\linewidth}
		\centering
		\includegraphics[width=2.5in,height=1.78in]{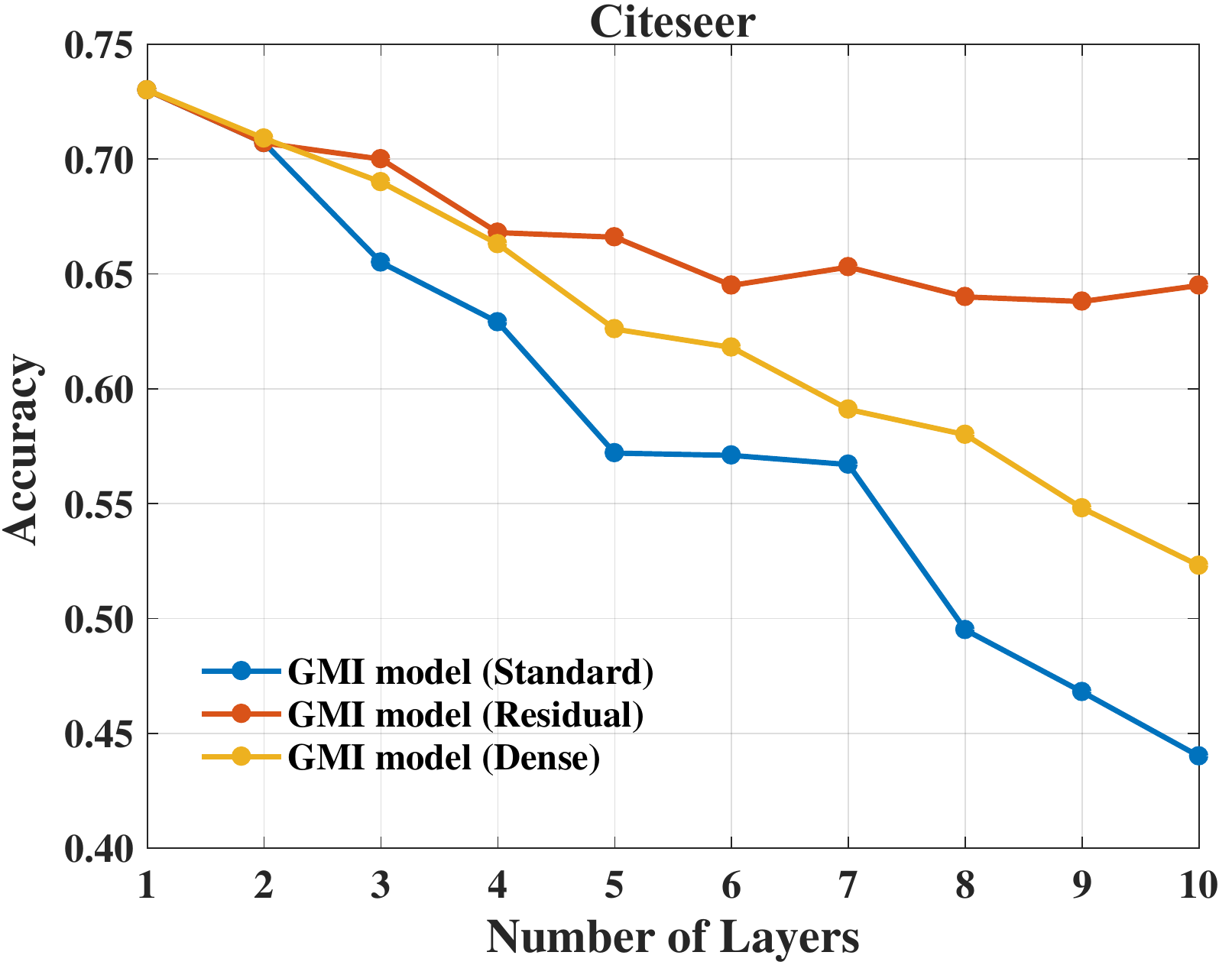}
		% 		\captionsetup{font={footnotesize}}
		% 		\captionsetup{justification=centering}
		% 		\centerline{\textbf{\quad(b)\ \ Citeseer}}
	\end{minipage}
	\caption{Impact of model depth on classification accuracy.}
	\label{fig:graph1}
\end{figure*}

\subsection{Visualization}
For an intuitive illustration, Table~\ref{table:t6} displays t-SNE~\cite{maaten2008visualizing} plots of the learned embeddings on Cora and Citeseer. From a qualitative perspective, the distribution of plots learned by FMI and DGI seems to be similar, and the embeddings generated by GMI exhibit more discernible clusters than raw features, FMI, and DGI. Especially on Cora, the compactness and separability of clusters are extremely obvious, which represents the seven topic categories. As for quantitative analysis, we attempt to measure clustering quality by calculating the Silhouette Coefficient score~\cite{rousseeuw1987silhouettes}. Specifically, we employ \textit{silhouette\_score} function from the scikit-learn Python package~\cite{pedregosa2011scikit} with all default settings and follow the user guide to perform the evaluation. The clustering of embeddings learned via GMI obtains a Silhouette Coefficient score of 0.425 on Cora, 0.402 on Citeseer, and 0.385 on PubMed, while DGI gets 0.417, 0.391, 0.373 and EP-B gains 0.384, 0.385, 0.379 on the three datasets, respectively. Both qualitatively and quantitatively, it demonstrates the great performance of GMI, which illustrates the rationality and effectiveness of graphical mutual information maximization in unsupervised graph representation learning.

\subsection{Influence of Model Depth}
In this part, we adjust the number of convolutional layers in the encoder to investigate the influence of model depth on classification accuracy. Considering the potential difficulty of training deep neural networks, suggested by~\cite{he2016deep}, we also experiment with a counterpart residual version of the standard GMI model, which adds identity shortcut connections between every two hidden layers to improve the training of deep networks. Here, we continue to have $D'=512$ features for each hidden layer and start applying identity shortcuts from the second layer as the input and output of the first layer are not the same dimension. Moreover, compared to the standard GMI model that achieves GMI maximization between the final representation and original input graph, we consider another variant, called dense GMI, which maximizes GMI between each hidden layer and input graph. Figure~\ref{fig:graph3} gives a detailed architecture illustration. The involved hyperparameters remain unchanged except that we train for fixed epochs (600 on Cora and Citeseer) without early stopping. Results are plotted in Figure~\ref{fig:graph1}.

For one thing, the increase of model depth significantly widens the performance gap between models with and without shortcut connections. The best result for Cora is obtained with a two-layer GCN encoder, while the best result for Citeseer is achieved with a one-layer GCN encoder. Except for the fact that the increase of model depth makes training with no adoption of shortcut connections difficult, we also assume that the farther neighborhood information brought by multiple convolutional layers may be noise for self-representation learning. Specifically, the different proximity between neighbors means distinct extents of similarity, if two arbitrary nodes are a certain distance apart, they are likely to be completely different. Therefore, in the standard GMI model, the information aggregated from the farther neighborhood might contain much noise that is dissimilar to the characteristic of node itself, which degrades the quality of learned embeddings and subsequent classification performance. In contrast, additional identity shortcuts enable the model to carry over the information of the previous layer's input, which can be regarded as a complementary process to similar neighborhood information from shallower layers to deeper layers, thus the residual version is relatively less vulnerable to model depth. For another, we observe that the dense GMI variant can also alleviate the performance deterioration to some extent, although MI tends to decay with depth by data processing inequality~\cite{cover2012elements}. This thanks to maximizing graphical MI between the output of each layer and input graph, which imposes a direct constraint on each hidden layer to preserve input information as intact as possible. Based on this observation, enforcing the constraint of maximizing MI on hidden layers to reduce the loss of information when training deep neural networks could be a good practice.

\section{Conclusion}
To overcome the dilemma of lacking available supervision and evade the potential risk brought by unreliable labels, we introduce a novel concept of graphical mutual information (GMI) to carry out graph representation learning in an unsupervised pattern. Its core lies in directly maximizing the mutual information between the input and output of a graph neural encoder in terms of node features and topological structure. Through our theoretical analysis, we give a definition of GMI and decompose it into a form of a weighted sum which can be calculated by the current mutual information estimation method MINE easily. Accordingly, we develop an unsupervised model and conduct two common graph mining tasks. The results exhibit that GMI outperforms state-of-the-art unsupervised baselines across both classification tasks (transductive and inductive) and link prediction tasks, sometimes even be competitive with supervised algorithms. Future work will concentrate on task-oriented representation learning or adapting the idea of GMI maximization to other types of graphs such as heterogeneous graphs and hypergraphs.

\begin{acks}
	This work was supported by National Key Research and Development Program of China (No. 2018AAA0101400), National Nature Science Foundation of China (No. 61872287 and No. 61532015), Innovative Research Group of the National Natural Science Foundation of China (No. 61721002), Innovation Research Team of Ministry of Education (IRT\_17R86), and Project of China Knowledge Center for Engineering Science and Technology. Besides, this research was funded by National Science and Technology Major Project of the Ministry of Science and Technology of China (No. 2018AAA0102900). 
\end{acks}

%%
%% The next two lines define the bibliography style to be used, and
%% the bibliography file.
\bibliographystyle{ACM-Reference-Format}
\balance
\bibliography{reference}

%%
%% If your work has an appendix, this is the place to put it.

\end{document}